\documentclass{article}

\usepackage{microtype}
\usepackage{graphicx}
\usepackage{subcaption}
\usepackage{booktabs} 

\usepackage{booktabs}
\usepackage{graphicx} 

\usepackage{booktabs}
\usepackage[table]{xcolor}
\definecolor{ICMLSalmon}{RGB}{250,128,114}
\newcommand{\modelsep}{\hspace{2pt}\color{black!35}\vrule width 0.8pt\hspace{6pt}\color{black}}

\makeatletter
\providecommand{\bm}[1]{\boldsymbol{#1}}
\makeatother
\usepackage{hyperref}

\usepackage[accepted]{icml2026}

\usepackage{amsmath}
\usepackage{amssymb}
\usepackage{mathtools}
\usepackage{amsthm}

\usepackage[capitalize,noabbrev]{cleveref}

\theoremstyle{plain}
\newtheorem{theorem}{Theorem}[section]
\newtheorem{proposition}[theorem]{Proposition}

\newtheorem{corollary}[theorem]{Corollary}
\theoremstyle{definition}

\theoremstyle{remark}

\usepackage[textsize=tiny]{todonotes}

\icmltitlerunning{Revisiting Anisotropy in Language Transformers: The Geometry of Learning Dynamics}

\begin{document}

\twocolumn[
  \icmltitle{Revisiting Anisotropy in Language Transformers: The Geometry of Learning Dynamics}



  \icmlsetsymbol{equal}{*}

  \begin{icmlauthorlist}
    \icmlauthor{Raphael Bernas}{centrale}
    \icmlauthor{Fanny Jourdan}{IRT,Mila}
    \icmlauthor{Antonin Poché}{IRT,irit}
    \icmlauthor{Céline Hudelot}{centrale}
  \end{icmlauthorlist}

  \icmlaffiliation{centrale}{MICS, CentraleSupelec, Université Paris-Saclay}
  \icmlaffiliation{irit}{IRIT Toulouse, Toulouse, France}
  \icmlaffiliation{IRT}{IRT Saint Exupéry, Toulouse, France}
  \icmlaffiliation{Mila}{Mila - Quebec AI Institute,
Montreal, Canada}

  \icmlcorrespondingauthor{Raphael Bernas}{raphael.bernas@centralesupelec.fr}

  \icmlkeywords{Large Language Models, Transformer Anisotropy, Mechanistic Interpretability, Embedding Geometry, Latent Space Dynamics, Semantic Manifold}

  \vskip 0.3in
]



\printAffiliationsAndNotice{}  

\begin{abstract}
  Since their introduction, Transformer architectures have dominated Natural Language Processing (NLP). However, recent research has highlighted an inherent anisotropy phenomenon in these models, presenting a significant challenge to their geometric interpretation. Previous theoretical studies on this phenomenon are rarely grounded in the underlying representation geometry. In this paper, we extend them by deriving geometric arguments for how frequency-biased sampling attenuates curvature visibility and why training preferentially amplify tangent directions. Empirically, we then use concept-based mechanistic interpretability during training, rather than only post hoc, to fit activation-derived low-rank tangent proxies and test them against ordinary backpropagated true gradients. Across encoder-style and decoder-style language models, we find that these activation-derived directions capture both unusually large gradient energy and a substantially larger share of gradient anisotropy than matched-rank normal controls, providing strong empirical support for a tangent-aligned account of anisotropy.
\end{abstract}

\section{Introduction}
Representations produced by Transformer language models \cite{vaswani2023attentionneed} are often \emph{anisotropic}: instead of spreading roughly uniformly, token (or sentence) embeddings concentrate in a narrow cone and exhibit high pairwise cosine similarity even for unrelated content \citep{mu2018allbutthetop,ethayarajh2019contextual,gao2019representation,bis-etal-2021-much}. This geometric bias was initially framed as a pathology because it degrades similarity-based reasoning, complicates probing and linear separation, and can distort semantic geometry for downstream uses such as retrieval or sentence embedding pipelines \citep{timkey2021allbark,mu2018allbutthetop}. Empirically, anisotropy has been linked to training and modeling details, including likelihood maximization with weight tying and the induced ``representation degeneration'' of embeddings \citep{gao2019representation,zhang-etal-2020-revisiting}, as well as the influence of rare-token gradients and frequency effects \citep{yu2022raretokens,zhou2021frequency,diehl-martinez-etal-2024-mitigating}.

However, recent work suggests that anisotropy should not be treated as a purely negative artifact. Some conclusions in the literature depend on brittle isotropy metrics \citep{rudman-etal-2022-isoscore}; fully isotropic spaces are not always compatible with clustered representations and linear classification objectives \citep{mickus-etal-2024-isotropy}; and explicit regularization of embedding geometry can even favor retaining or increasing anisotropy during training \citep{rudman2024stableanisotropicregularization}. Moreover, recent work suggests that anisotropy may be inherent to the self-attention mechanism itself, rather than solely a consequence of training objectives \citep{godey2024anisotropy}. Additionally, anisotropy is coherent with concentration along a few dominant directions, effectively reducing the \emph{intrinsic} dimensionality of the representation space. This observation is consistent with the fact that successful fine-tuning and generalization often operate in low-dimensional subspaces despite massive overparameterization \citep{aghajanyan2020intrinsic,razzhigaev2024anisotropyid}. From this perspective, anisotropy may reflect an implicit dimension reduction that supports generalization in overparameterized models.

A recurring concern, however, is that anisotropy may also be partly explained by the syntactic nature of language data and by how syntax is geometrically encoded in Transformer representations. Prior work shows that dependency structure can be embedded in representation space and that syntax can occupy prominent subspaces \citep{hewitt2019structural,coenen2019visualizing,yokoi2024zipfianwhitening}. If dominant geometric factors are driven by syntactic regularities, this may limit semantic disentanglement and make both model behavior and mechanistic analysis harder to interpret: humans may then ``see'' geometry that largely reflects syntax and frequency rather than meaning \citep{zhou2021frequency}. Despite the close connection between anisotropy and latent-space geometry, anisotropy has rarely been studied through an explicitly \emph{geometric} lens that accounts for local structure and on-manifold phenomena. Recent evidence suggests that token spaces may be better described as stratified objects with measurable curvature, rather than smooth manifolds \citep{Robinson2025,Probing2025}. Building on this viewpoint, we make the following contributions:
\begin{itemize}
  \item We provide theoretical arguments that the underlying \emph{syntactic geometry} of language favors a low-dimensional linear organization, and thus may favor anisotropic representations.
  \item We connect these geometric biases to Transformer learning dynamics, highlighting architectural and training mechanisms that preferentially amplify such directions.
  \item We empirically test the theory across encoder-style and decoder-style checkpoints by extracting activation-derived concept subspaces during training and showing that true-gradient energy and anisotropy are preferentially aligned with these tangent proxies; we further report supplementary mechanistic-interpretability~\citep{olah2020zoom} analyses of representation geometry during learning.
\end{itemize}

\section{Theoretical Arguments}

\subsection{Manifold hypothesis for language embeddings}
\label{sec:manifold-hypothesis}

Let $W_E\in\mathbb{R}^{V\times d}$ be a learned token embedding matrix (with $V$ the vocabulary size and $d$ the hidden dimension). Let
$\mathcal{T}_\theta:\mathbb{R}^d\to\mathbb{R}^d$ denote a transformer block and let $\mathcal{P}_\omega$ denote a prediction head, respectively parameterized by $\theta$ and $\omega$. For a
model with $L$ blocks we write
\[
f_{\theta,\omega}(w)=\mathcal{P}_\omega\circ\mathcal{T}_\theta^{(L)}\circ\cdots\circ\mathcal{T}_\theta^{(1)}\big(W_E^\top e_w\big),
\]
with $e_w$ the one-hot for token $w$.  The classical manifold hypothesis asserts
that high-dimensional observations concentrate near a differentiable submanifold
$\mathcal{M}\subset\mathbb{R}^d$ of low intrinsic dimension; this hypothesis
motivates manifold learning and manifold-aware algorithm~\cite{doi:10.1126/science.290.5500.2323,JMLR:v7:belkin06a,fefferman2013testingmanifoldhypothesis}.

For language embeddings we adopt a \emph{local manifold} (or stratified-manifold)
view: in sufficiently small semantic neighborhoods the empirical distribution of
embedding vectors concentrates near a low-dimensional differentiable manifold
or a small union of manifolds $\bigcup_k\mathcal{M}_k\subset\mathbb{R}^d$.
This is deliberately weaker than a global manifold claim and suffices for
local tangent-space approximations, intrinsic-dimension estimates and geometric
arguments used later in this paper.  Recent empirical and theoretical work
complicates a naive global manifold assumption: \citet{Robinson2025} demonstrate
that token subspaces often fail manifold (and even smooth fiber-bundle) models,
exhibiting statistically detectable local structure that is incompatible with
a single smooth manifold hypothesis.  Complementary work
shows that the token subspace topology can be recovered up to homeomorphism, which supports viewing token geometry through a topological
or stratified lens rather than as a single global smooth manifold \cite{Probing2025}.

Accordingly, for the remainder of this paper we assume the \emph{local manifold
hypothesis}: in every sufficiently small semantic neighborhood there exists a
$C^{3}$ manifold that approximates the data. This framework allows us to treat token embeddings not as static points, but as samples from distributions defined over these local manifolds. In the following section, we characterize a dominant bias in this regime: the frequency-dependent shrinkage of the sampling variance as measured in the ambient Euclidean metric.

\subsection{Empirical Check: Frequency-Induced Variance Collapse \& the Embeddings Sampling Law}

\begin{figure*}[t]
  \vskip 0.2in
  \begin{center}
    \centerline{\includegraphics[width=0.78\textwidth]{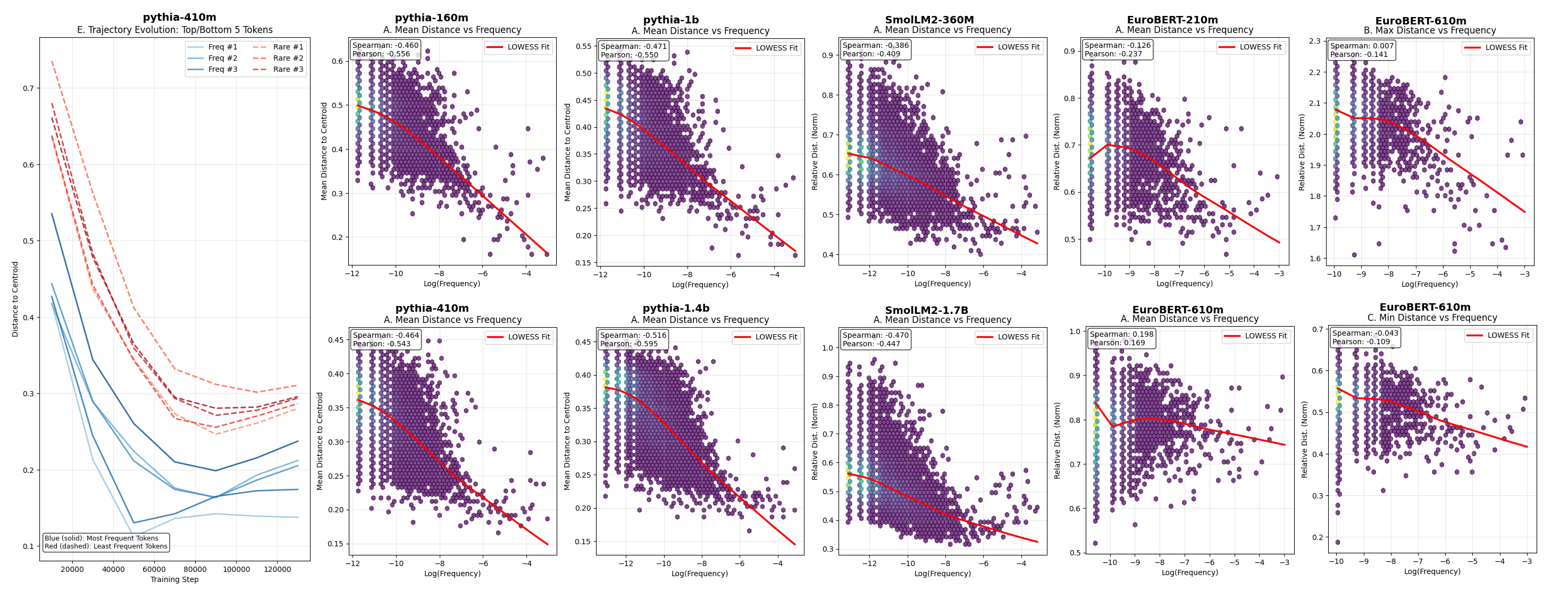}}
    \caption{\textbf{Temporal dynamics of embedding variance across model architectures.} \textbf{Left:} Training trajectories for the top-5 most frequent versus bottom-5 least frequent tokens in \textsc{Pythia}-410m, illustrating the tighter confinement of frequent tokens. \textbf{Middle:} Mean Euclidean distance to centroid vs. log-frequency for \textsc{Pythia} (160m, 410m, 1b, 1.4b), \textsc{SmoLLM2} (360m, 1.7b), and \textsc{EuroBERT} (210m, 610m). Causal decoders exhibit a strong negative correlation (Pearson $\rho \approx -0.5$), confirming frequency-induced variance collapse. \textbf{Rightmost:} Detailed Min/Max distance statistics for \textsc{EuroBERT}-610m. Although the mean trend is attenuated, the distributional view still shows a weaker frequency-dependent concentration pattern.}
    \label{fig:frequency_variance_dynamics}
  \end{center}
\end{figure*}

Prior research has established a negative dependence between token frequency and embedding norm, demonstrating that frequent tokens exhibit significantly reduced magnitudes compared to rare tokens \citep{oyama2023normwordembeddingencodes, liang2021learningremoveisotropicpretrained}. We posit that this norm shrinkage has not only a geometric effect, but constraints the \textit{sampling law} of the token representations within the local syntactic manifold $\mathcal{M}$. 

Formally, let $\mu_i$ denote the ``idealized'' centroid of the embedding for token $i$. We define the deviation of a sampled realization $v_i$ from this centroid as $t = \|v_i - \mu_i\|$. Let $g_i(t)$ be the probability density function governing this deviation. Our hypothesis is that for high-frequency tokens, the shrinkage of the global norm forces $g_i(t)$ to concentrate sharply around low values of $t$. In contrast, rare tokens retain a more diffuse distribution, allowing for greater geometric expressivity. This implies that frequent tokens are effectively ``pinned'' to their idealized form $\mu_i$, limiting the variance available to the model when sampling these tokens during contextual processing.

To empirically characterize $g_i(t)$, we leverage the stochastic nature of the training process. While the inference model is deterministic, the training phase subjects the model to continuous stochastic updates (via mini-batch selection and optimizer dynamics). We can thus view the sequence of embeddings across training checkpoints as discrete realizations drawn from the model's internal sampling distribution. Let $W_E^{(k)} \in \mathbb{R}^{V \times d}$ represent the embedding matrix at checkpoint step $k$. We approximate the sampling law for token $i$ using the set of realizations $\{v_i^{(k)}\}_{k=k_{start}}^{K}$. 

Crucially, we set $k_{start}$ to a post-warmup phase rather than initialization. Input embeddings, particularly for frequent tokens, are the first parameters to structurally stabilize due to their high update frequency during backpropagation. By ignoring the initial transient phase ($k < k_{start}$), we ensure that the observed variation $\{v_i^{(k)}\}$ reflects the inherent \textit{sampling noise} (the ``shaking'' of the converged concept) rather than the trajectory of learning from random initialization.

We check our hypothesis across the \textsc{Pythia} ~\cite{biderman2023pythiasuiteanalyzinglarge}, \textsc{SmoLLM} ~\cite{allal2025smollm2smolgoesbig}, and \textsc{EuroBERT} ~\cite{boizard2025eurobertscalingmultilingualencoders} suites. Figure~\ref{fig:frequency_variance_dynamics} quantifies the relationship between token frequency and the geometric stability of embeddings during training. For the causal decoder architectures (\textsc{Pythia} and \textsc{SmoLLM2} suites), we observe a robust negative correlation (Pearson $\rho \approx -0.5$) between log-frequency and the mean distance to the centroid. This empirically validates that frequent tokens are subjected to a highly concentrated sampling law $g_i(t)$, forcing them to converge rapidly and oscillate within a restricted volume of the ambient space. In contrast, rare tokens maintain larger trajectory variances, retaining higher kinematic freedom throughout the optimization process.

The \textsc{EuroBERT} models exhibit a weaker version of this trend, with substantially attenuated mean correlations ($\rho \approx -0.2$ for the 210m variant and $\rho \approx 0.1$ for the 610m variant). We therefore do not treat the mean statistic alone as decisive in this case. However, the rightmost panel of Figure~\ref{fig:frequency_variance_dynamics} shows that the distributional picture is not flat: for \textsc{EuroBERT}-610m, the minimum and maximum deviation statistics still move mildly but consistently in the expected direction, indicating that frequency bias continues to leave a trace even if the aggregate effect is attenuated. In that sense, \textsc{EuroBERT} is better viewed as a weaker instance of the same concentration phenomenon than as a contradiction to it.
 We provide some intuition on a plausible explanation: First, the Masked Language Modeling (MLM) objective, utilizing a masking rate (0.5), may impose different gradient dynamics than the autoregressive likelihood maximization of decoders. Second, \textsc{EuroBERT} explicitly optimizes for multilingual parity through data sampling strategies designed to ensure equivalent representation across languages \citep{boizard2025eurobertscalingmultilingualencoders}. This balancing likely counteracts the natural over-representation bias of the corpus, preventing high-frequency tokens from dominating the optimization landscape to the same extent observed in standard English-dominant decoders. We empirically confirmed that for frequent tokens, the approximated density $g_i(t)$ is condensed, whereas for rare tokens, it allows farther spread.

\subsection{Bias on Manifold}
\label{sec:ambient-bias}

We work in the embedding space $\mathbb{R}^d$ produced by a standard encoder
(the learned embedding matrix $W_E$ or the output of a transformer).  We adopt
the Data Manifold Hypothesis in a local sense and assume there exists an
idealized semantic manifold $\mathcal{M}\subset\mathbb{R}^d$ of class $C^{3}$
which would capture the canonical semantic configurations from which word
vectors are (approximately) sampled.  Importantly, we do \emph{not} require
global manifoldity: our theoretical claims require only that, locally, the
distribution concentrates near a $C^{3}$ submanifold (or a finite union / a
stratified family of such submanifolds).  This local assumption suffices for our
tangent-space expansions and curvature bounds; if a true semantic manifold
exists globally and is $C^{3}$, identical statements follow. Our goal in this section is to (i) state a minimal set of geometric and sampling assumptions that formalize the intuition developed above, (ii) record the extrinsic second-order expansion used to relate intrinsic displacements to ambient Euclidean distances, and (iii) show how a radial sampling bias induced by standard embedding procedures systematically reduces the empirical visibility of high-curvature directions.

\vspace{0.2em}
\noindent\textbf{Notation and standing assumptions.}
Let \(\mu\in\mathcal{M}\) be a reference point (the semantic center). For \(x\in\mathcal{M}\) close to \(\mu\) write \(v\in T_{\mu}\mathcal{M}\) for the tangent component so that \(x\) corresponds to an intrinsic displacement \(v\) from \(\mu\). Denote \(r=\|v\|\) and let \(u=v/r\) be the tangent direction. We assume:
\begin{enumerate}
  \item[(A1)] (\(C^3\) manifold + reach) \(\mathcal{M}\) is a compact \(C^{3}\) embedded Riemannian submanifold of \(\mathbb{R}^{d}\) with positive reach \(\tau>0\). In particular the nearest-point projection is well-defined and unique in a tubular neighbourhood of radius \(\tau\)~\cite{smolyanov2005chernoffstheoremdiscretetime}.
  \item[(A2)] (Local extrinsic regularity) the embedding \(\iota:\mathcal{M}\hookrightarrow\mathbb{R}^d\) has uniformly bounded second fundamental form \(\mathrm{II}\) and bounded third derivatives on the neighbourhood of \(\mu\) we consider (this controls the \(C^3\) remainder constants in local expansions)~\cite{lang1999fundamentals}.
  \item[(A3)] (Radial sampling model) samples are drawn according to an ambient density \(p_{\mathrm{amb}}(y)=g(t)\) with \(t=\|y-\mu\|\), where \(y\) is the sampled embedding associated to the idealized \(\mu\) and \(g\) is strictly decreasing when \(t\) increase. This models the empirical relation \(t=\phi(f)\), high-frequency words concentrate at small \(t\) ~\cite{oyama2023normwordembeddingencodes, liang2021learningremoveisotropicpretrained}.
\end{enumerate}

\medskip
\noindent\textbf{Arc-chord comparison and curvature.}
For clarity we distinguish intrinsic (arc) and ambient (chord) distances. Let
\(r := d_{\mathcal M}(\mu,y)\) be the intrinsic (geodesic) distance and
\(t := \|y-\mu\|\) the ambient Euclidean chord. Define the directional curvature scalar \(\kappa(u):=\|\mathrm{II}_\mu(u,u)\|^2\).
The following classical expansion (valid for small \(r\)) isolates curvature:
\begin{equation}\label{eq:chord-arc}
\begin{aligned}
  t^{2} &= r^{2} - \tfrac{1}{12}\,\kappa(u)\,r^{4} + \mathcal{O}(r^{5}).
\end{aligned}
\end{equation}
(For formula \eqref{eq:chord-arc} see Proposition 6 or Lemma 16 in ~\cite{smolyanov2005chernoffstheoremdiscretetime,gao2019diffusiongeometryfibrebundles}).

\noindent\textbf{Sampling bias induced by ambient-distance dependence.}
Fix a small ambient chord \(t>0\) and consider tangent directions \(u\) for points at that ambient distance from \(\mu\). The following proposition formalizes the \emph{compressive} effect of curvature on the relationship between intrinsic and ambient distances.

\begin{proposition}[Compressing effect of curvature]\label{prop:compress}
Under (A1)-(A3), fix a small ambient radius \(t>0\). For two unit tangent directions \(u,u'\), the intrinsic distances satisfy, to leading order,
\[
  r^{2}(t,u) - r^{2}(t,u') \;=\; \tfrac{1}{12}\,( \kappa(u) - \kappa(u') )\, t^{4} \;+\; \mathcal{O}(t^{5}).
\]
At fixed ambient radius \(t\), directions with larger \(\kappa(u)\) correspond to larger intrinsic distances \(r\).
\end{proposition}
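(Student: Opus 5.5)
The plan is to invert the chord--arc expansion \eqref{eq:chord-arc}, which expresses $t^2$ in terms of $r$, to obtain $r^2$ as a function of $(t,u)$. Then subtract the expressions for the two directions $u,u'$ at the same ambient radius $t$.

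\textbf{Step 1: well-posedness of $r(t,u)$.} Under (A1)--(A2), for each unit $u\in T_\mu\mathcal M$ the geodesic $\gamma_u(s)=\exp_\mu(su)$ is $C^3$, and its chord length $t(s,u)=\|\gamma_u(s)-\mu\|$ satisfies $t(s,u)=s+\mathcal O(s^3)$ uniformly in $u$. The constants are uniform because of the bounded second fundamental form and bounded third derivatives, together with compactness of the unit sphere in $T_\mu\mathcal M$. Hence $s\mapsto t(s,u)$ is strictly increasing on $[0,s_0]$ for some $s_0$ independent of $u$. For $t$ small enough, $r(t,u)$ is therefore the unique solution of $t(r,u)=t$, and it satisfies $r=t+\mathcal O(t^3)$, in particular $r=\mathcal O(t)$ uniformly in $u$.

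\textbf{Step 2: inversion.} Rewrite \eqref{eq:chord-arc} as
\[
r^2 = t^2 + \tfrac{1}{12}\kappa(u) r^4 + \mathcal O(r^5).
\]
From Step 1, $r^2=t^2+\mathcal O(t^4)$, so $r^4=t^4+\mathcal O(t^6)$. Also $\mathcal O(r^5)=\mathcal O(t^5)$. Substituting gives
\[
r^2(t,u)=t^2+\tfrac{1}{12}\kappa(u)t^4+\mathcal O(t^5),
\]
where $\kappa$ is bounded by (A2). The same formula holds for $u'$. Subtracting, the $t^2$ terms cancel and the claimed identity follows.

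\textbf{Step 3: the monotonicity statement.} Suppose $\kappa(u)>\kappa(u')$. For $t$ small enough that $\tfrac1{12}(\kappa(u)-\kappa(u'))t^4$ dominates $Ct^5$, we get $r^2(t,u)>r^2(t,u')$, hence $r(t,u)>r(t,u')$. This is the "to leading order" sense of the statement. When $\kappa(u)=\kappa(u')$, nothing is claimed beyond the $\mathcal O(t^5)$ bound.

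\textbf{Main obstacle.} The delicate point is the uniformity of the $\mathcal O(r^5)$ remainder in \eqref{eq:chord-arc} over directions $u$. Without it, subtracting two expansions does not yield a remainder of a single $\mathcal O(t^5)$ form. I would handle this by writing the remainder through Taylor's theorem with integral form applied to $\gamma_u$. Its coefficient involves third and fourth derivatives of the geodesic, which are controlled by (A2) and compactness. If the $C^3$ hypothesis only gives third-derivative control, I would prove the $t^4$ coefficient only with an $o(t^4)$ remainder. I would then remark that $\mathcal O(t^5)$ requires the bounds implicit in the cited expansion from \cite{smolyanov2005chernoffstheoremdiscretetime,gao2019diffusiongeometryfibrebundles}, and invoke that expansion as stated with uniform constants.
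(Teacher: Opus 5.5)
Your proposal is correct and follows the paper's argument: invert \eqref{eq:chord-arc} to get $r^2=t^2+\tfrac{1}{12}\kappa(u)t^4+\mathcal{O}(t^5)$, then subtract the two instances at the same $t$; your Steps 1 and 3 (well-definedness of $r(t,u)$, uniformity in $u$, and the explicit small-$t$ threshold for the monotonicity claim) make rigorous what the paper leaves implicit. Your fallback to an $o(t^4)$ remainder is unnecessary: for $f(s)=\|\gamma_u(s)-\mu\|^2$ one has $f'''(s)=2\langle\gamma_u'''(s),\gamma_u(s)-\mu\rangle$, and using $\langle\gamma_u''',\gamma_u'\rangle=-\|\gamma_u''\|^2$ with $\gamma_u''(0)=\mathrm{II}_\mu(u,u)$ gives $f'''(s)=-2\kappa(u)s+\mathcal{O}(s^2)$ uniformly in $u$ under the bounded-third-derivative part of (A2), so three integrations yield the $\mathcal{O}(r^5)$ remainder without any fourth-derivative control.
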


\begin{proof}
See Appendix ~\ref{sec:proof-proposition-compression}
\end{proof}

\noindent\textbf{Density-theoretical formulation.}
We now derive the probabilistic consequences of Proposition~\ref{prop:compress} under the ambient sampling model (A3). Let \(k=\dim\mathcal{M}\). In intrinsic polar coordinates around \(\mu\), the Riemannian volume element includes an intrinsic curvature correction~\cite{lang1999fundamentals}: \(d\nu = \big(1 + \mathcal{R}(u)\tfrac{r^2}{6} + \mathcal{O}(r^3)\big)r^{k-1}\,dr\,d\sigma(u)\), where \(d\sigma\) is the uniform measure on \(S^{k-1}\subset T_{\mu}\mathcal{M}\) and \(\mathcal{R}(u):=\mathrm{Ric}(u,u)\) is the Ricci curvature in direction \(u\). This accounts for how the intrinsic geometry of \(\mathcal{M}\) distorts volume, distinct from the extrinsic curvature \(\kappa(u)\) captured by the second fundamental form. Points are sampled with ambient density \(g(t)\), so the joint density on \(\mathcal{M}\) in coordinates \((t,u)\) is
\begin{equation}\label{eq:joint-density}
\begin{aligned}
  p(t,u) &\propto g(t)\cdot \Big(1 + \mathcal{R}(u)\tfrac{r^2}{6}\Big)\cdot r(t,u)^{k-1}\cdot h(u)\cdot \Big|\frac{\partial r}{\partial t}\Big|,
\end{aligned}
\end{equation}
where \(h(u)\) is the intrinsic angular prior and the Jacobian \(|\partial r/\partial t|\) arises from the coordinate change. 

\begin{corollary}[Curvature-induced directional bias]\label{cor:directional-bias}
Under (A1)-(A2), at fixed small ambient radius \(t\), the log-ratio of conditional probabilities for two directions \(u,u'\) satisfies
\begin{align*}
  \log\frac{p(u\mid t)}{p(u'\mid t)} &= \log\frac{h(u)}{h(u')} + \Big(\tfrac{1}{6}(\mathcal{R}(u)-\mathcal{R}(u')) \\[-4pt]
  &\qquad+ \tfrac{k+2}{24}(\kappa(u)-\kappa(u'))\Big)t^{2} + \mathcal{O}(t^{3}).
\end{align*}
If \(h\) is approximately uniform, directions are biased by both intrinsic curvature \(\mathcal{R}(u)\) (from volume distortion) and extrinsic curvature \(\kappa(u)\) (from embedding compression).
\end{corollary}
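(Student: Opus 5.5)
The plan is to read off $p(u\mid t)$ from the joint density \eqref{eq:joint-density} and expand each factor in powers of $t$ using the inverse of the chord--arc relation \eqref{eq:chord-arc}, as already used in Proposition~\ref{prop:compress}. Since $p(u\mid t)=p(t,u)/p(t)$, both the marginal normalizer $p(t)$ and the radial factor $g(t)$ depend only on $t$. They therefore cancel in the ratio $p(u\mid t)/p(u'\mid t)$. This is why (A3) plays no role beyond defining the model, and the statement only invokes (A1)--(A2). The log-ratio then reduces to
\[
\log\frac{h(u)}{h(u')}+\Big[\log\big(1+\tfrac{\mathcal{R}(u) r^2}{6}\big)+(k-1)\log r+\log\big|\tfrac{\partial r}{\partial t}\big|\Big]_{u'}^{u},
\]
where each bracketed term is evaluated at $r=r(t,\cdot)$.

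The first key step is to invert \eqref{eq:chord-arc} at fixed direction $u$. From $t^2=r^2-\tfrac{1}{12}\kappa(u)r^4+\mathcal{O}(r^5)$ and $r=t+\mathcal{O}(t^3)$, one obtains $r^2=t^2+\tfrac{1}{12}\kappa(u)t^4+\mathcal{O}(t^5)$, which is exactly the content of Proposition~\ref{prop:compress}. Taking square roots gives
\[
r(t,u)=t\big(1+\tfrac{1}{24}\kappa(u)t^2+\mathcal{O}(t^3)\big).
\]
The three factors then expand as follows:
\begin{itemize}
\item $(k-1)\log r=(k-1)\log t+\tfrac{k-1}{24}\kappa(u)t^2+\mathcal{O}(t^3)$;
\item differentiating the series in $t$ gives $\partial r/\partial t=1+\tfrac{3}{24}\kappa(u)t^2+\mathcal{O}(t^3)$, so its logarithm is $\tfrac{3}{24}\kappa(u)t^2+\mathcal{O}(t^3)$;
\item the volume factor contributes $\tfrac{1}{6}\mathcal{R}(u)r^2=\tfrac16\mathcal{R}(u)t^2+\mathcal{O}(t^4)$.
\end{itemize}
Summing, the $u$-dependent part is $\tfrac16\mathcal{R}(u)t^2+\tfrac{(k-1)+3}{24}\kappa(u)t^2+\mathcal{O}(t^3)$. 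Subtracting the same expression at $u'$ cancels the $(k-1)\log t$ terms and yields the claimed coefficient $\tfrac{k+2}{24}$.

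The main obstacle is justifying the differentiation of an asymptotic expansion, that is, ensuring that the $\mathcal{O}(t^5)$ remainder in $r^2(t)$ has derivative of order $\mathcal{O}(t^4)$. A plain $\mathcal{O}$ bound does not give this. I would handle it by noting that, under (A1)--(A2), $t^2=\|\exp_\mu(rv)-\mu\|^2$ is a $C^3$ function of $r$, and in fact smoother in the sense of Taylor's theorem with integral remainder. Its remainder is then a smooth function of the form $r^5\,\rho(r,u)$ with $\rho$ bounded together with $\partial_r\rho$. The inverse function theorem then makes $r(t,u)$ differentiable with a controlled remainder, so the derivative expansion is legitimate.

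The other technical point is uniformity in $u$. Compactness of the unit sphere in $T_\mu\mathcal{M}$, together with the bounded second fundamental form and third derivatives in (A2), gives remainder constants independent of direction. This allows the $\mathcal{O}(t^3)$ terms for $u$ and $u'$ to be combined. The final remark of the corollary then follows by setting $h(u)=h(u')$.
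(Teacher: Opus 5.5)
Your proposal is correct and follows the paper's proof in Appendix~\ref{sec:proof-corollary-bias} essentially step for step. Both invert the chord--arc relation, take the square root, differentiate to get $\partial r/\partial t=1+\tfrac18\kappa(u)t^2+\mathcal{O}(t^3)$, expand $r^{k-1}$ and the Ricci factor, use $\tfrac{k-1}{24}+\tfrac{3}{24}=\tfrac{k+2}{24}$, and let the $t$-only factors cancel in $p(u\mid t)$; the only difference is that you sum logarithms where the paper multiplies expansions.

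The paper differentiates the asymptotic series without comment, so your concern about differentiating an $\mathcal{O}$-remainder is legitimate and goes beyond what the paper does. However, your fix claims more smoothness than (A1)--(A2) provide: a remainder $r^5\rho$ with $\partial_r\rho$ bounded needs far more than $C^3$. The economical justification is to differentiate the exact identity $t^2=\|\gamma(r)-\mu\|^2$, with $\gamma$ the geodesic from $\mu$ in direction $u$. This gives $\tfrac{d(t^2)}{dr}=2r-\tfrac13\kappa(u)r^3+\mathcal{O}(r^4)$ under the same regularity already needed for the $\mathcal{O}(r^5)$ term in \eqref{eq:chord-arc}.
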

\begin{proof}
    See Appendix ~\ref{sec:proof-corollary-bias}.
\end{proof}

This curvature bias is a purely geometric effect arising from (A1)-(A2): at any fixed ambient shell, the combined influence of intrinsic volume distortion (Ricci curvature \(\mathcal{R}\)) and extrinsic embedding geometry (second fundamental form \(\kappa\)) determines which directions are over-represented. Crucially, this bias does not depend on the radial sampling law \(g(t)\) and would, in principle, enhance the embedding expressivity by making curved regions more visible.

\medskip
\noindent\textbf{Attenuation of curvature bias under radial concentration.}
While \(g(t)\) cancels in the conditional \(p(u\mid t)\), it critically shapes the \emph{marginal} distribution of directions. The marginal density on \(u\) is
\begin{equation}\label{eq:marginal-u}
\begin{aligned}
  p(u) &= \int_{0}^{T} p(u\mid t)\,p(t)\,dt 
       \propto h(u)\Big[\int_{0}^{T}\!g(t)\,t^{k-1}dt \\
       &\quad+ \Big(\mathcal{R}(u)\tfrac{1}{6} + \tfrac{k+2}{24}\kappa(u)\Big)\!\int_{0}^{T}\!g(t)\,t^{k+1}dt\Big],
\end{aligned}
\end{equation}
where \(T\) is the validity range. Define the moment ratio
\[
  \eta_g := \frac{\int_{0}^{T} g(t)\,t^{k+1}\,dt}{\int_{0}^{T} g(t)\,t^{k-1}\,dt}.
\]
Then
\begin{equation}\label{eq:marginal-simplified}
\begin{aligned}
  p(u) &\propto h(u)\Big(1 + \Big(\mathcal{R}(u)\tfrac{1}{6} + \tfrac{k+2}{24}\kappa(u)\Big)\,\eta_g\Big) \\
       &= h(u)\Big(1 + \tfrac{1}{24}\big(4\mathcal{R}(u) + (k+2)\kappa(u)\big)\,\eta_g\Big).
\end{aligned}
\end{equation}
Since \(g(t)\) is decreasing (A3), the ratio \(\eta_g\) weights larger values of \(t\) relative to smaller ones. When \(g(t)\) concentrates mass at small \(t\)-as occurs empirically when high-frequency words cluster near the origin-the moment ratio \(\eta_g\to 0\). Consequently, the curvature correction in \eqref{eq:marginal-simplified} vanishes, and the marginal \(p(u)\) becomes approximately proportional to the intrinsic prior \(h(u)\) alone.

This is the central observation: the geometric curvature bias (Corollary~\ref{cor:directional-bias}) that would otherwise enhance sampling along high-curvature directions is \emph{attenuated} when the ambient radial density \(g(t)\) concentrates at small \(t\). High-curvature regions of the manifold, which the bias would naturally over-sample, become under-represented in the marginal distribution. The result is a loss of expressivity: the observed sample fails to reflect the intrinsic geometric structure of \(\mathcal{M}\), and curvature-derived features are suppressed. Appendix~\ref{app:geometric-intuition} provides a visual illustration and an accessible introduction to these geometric concepts.

\medskip
\noindent\textbf{Secondary effect: compression of intrinsic variation.}
A secondary geometric effect reinforces the loss of expressivity. By \eqref{eq:chord-arc}, for fixed intrinsic radius \(r\), high-curvature directions yield \emph{smaller} ambient distances \(t\). Consequently, intrinsic variation along curved directions is compressed into narrower ambient shells. Let \(\Delta r = r_{2} - r_{1}\) be an intrinsic radial increment; the corresponding ambient increment satisfies
\begin{equation}\label{eq:compression-ratio}
\begin{aligned}
  \frac{\Delta t}{\Delta r} &= 1 - \tfrac{1}{8}\kappa(u)r^{2} + \mathcal{O}(r^{3}).
\end{aligned}
\end{equation}
For directions with \(\kappa(u)>0\), the ratio \(\Delta t/\Delta r < 1\), meaning equal intrinsic increments produce smaller ambient increments. When \(g(t)\) concentrates at small \(t\), this compression compounds the attenuation of curvature bias: not only is the bias itself suppressed, but the geometric locus where high-curvature points reside (smaller \(t\) for given \(r\)) coincides precisely with the region where sampling is densest, further obscuring the distinction between curved and flat directions.

\noindent\textbf{Interpretation.}
Proposition~\ref{prop:compress} and Corollary~\ref{cor:directional-bias} establish that curvature induces a directional bias at any fixed ambient radius through two complementary mechanisms: intrinsic curvature \(\mathcal{R}(u)\) (Ricci curvature) governs how the Riemannian volume element distorts on \(\mathcal{M}\), while extrinsic curvature \(\kappa(u)\) (second fundamental form) controls how geodesics in \(\mathcal{M}\) compress relative to ambient chords. Both effects make high-curvature directions geometrically over-represented. This bias is intrinsic to the manifold geometry (A1)-(A2) and, in the absence of radial sampling concentration, would enhance the visibility of curved regions in the embedding. However, assumption (A3) introduces a radial weighting \(g(t)\) that, when concentrated at small \(t\), drives the moment ratio \(\eta_g\to 0\) and thereby attenuates \emph{both} curvature contributions in the marginal distribution \(p(u)\). The net effect is a loss of geometric expressivity: the observed sample under-represents high-curvature directions relative to what the intrinsic geometry would predict.

\noindent\textbf{Limitations.}
The formal results derived above are inherently local: the chord-arc expansion \eqref{eq:chord-arc} and its inverse \eqref{eq:chord-arc-inv} require \(t\) to remain small relative to the reach \(\tau\). Consequently, the quantitative predictions-particularly the \(\mathcal{O}(t^{4})\) corrections in squared distances and the \(\mathcal{O}(t^{2})\) corrections in densities-may not hold uniformly across the entire manifold. However, this locality is precisely aligned with the regime of interest: when \(g(t)\) concentrates at small \(t\), the mass of the observed sample lies within the validity range of the expansion. Thus, while the analysis may not fully characterize the global geometry of \(\mathcal{M}\), it accurately describes the \emph{observed} manifold as revealed by frequency-biased sampling. Under stronger regularity assumptions (e.g., larger reach, or condition on injectivity radius), the chord-arc comparison extends to larger neighborhoods, but the qualitative conclusion persists: radial concentration of the sampling density attenuates the curvature bias and reduces the geometric expressivity of the embedding.

\subsection{Gradient Bias Toward Tangent Directions}
\label{sec:tangent-bias}

The geometric analysis of Section~\ref{sec:ambient-bias} reveals that radial concentration attenuates curvature visibility in the \emph{sampling} distribution. We now study a complementary \emph{optimization} bias arising during training: gradient updates preferentially reinforce tangent directions over normal directions, creating a self-amplifying mechanism that further suppresses curvature-derived features.

\noindent\textbf{Tangent-normal decomposition.}
Let \(\mu\in\mathcal{M}\) be a reference point. For a sample \(x\in\mathcal{M}\) close to \(\mu\), write the centered representation as
\[
  x_c := x - \mu = v + n, \qquad v \in T_{\mu}\mathcal{M},\; n \in N_{\mu}\mathcal{M}.
\]
The tangent component \(v\) captures first-order displacement along \(\mathcal{M}\), while the normal component \(n\) encodes curvature-induced deviation from the tangent plane. Under (A1)-(A2), the local expansion of \(\mathcal{M}\) yields [Lemma 3.3] ~\cite{Berenfeld_2021}
\begin{equation}\label{eq:normal-expansion}
  n = \tfrac{1}{2}\,\mathrm{II}_{\mu}(v,v) + \mathcal{O}(\|v\|^{3}),
\end{equation}
where \(\mathrm{II}_{\mu}\) is the second fundamental form at \(\mu\). Thus for \(t := \|v\|\) small, we have \(\|n\| = \mathcal{O}(t^{2})\).

\vspace{0.2em}
\noindent\textbf{Covariance structure.}
Recall from \eqref{eq:marginal-simplified} that when \(\eta_g \to 0\), the marginal directional distribution \(p(u)\) becomes approximately uniform on \(S^{k-1} \subset T_{\mu}\mathcal{M}\). Writing \(v = t\,u\) with \(u\) drawn from this approximately uniform distribution, standard concentration arguments give
\begin{equation}\label{eq:tangent-cov}
  \mathbb{E}[v v^{\top}] = c_k\,t^{2}\,P_T,
\end{equation}
where \(c_k > 0\) is a dimension-dependent constant. In contrast, the bounded curvature assumption (A2) with \(\|\mathrm{II}\|_{\mathrm{op}} \le C\) implies
\begin{equation}\label{eq:normal-cov}
  \mathbb{E}[n n^{\top}] \preceq c_k\,C^{2}\,t^{4}\,P_N.
\end{equation}
Where \(P_T\) and \(P_N\) denote a $k$-\textbf{dimensional} orthogonal subprojections onto the tangent space \(T_\mu M\) and the normal space \(N_\mu M\), respectively. The key observation is the \emph{separation of scales}: tangent covariance is \(\mathcal{O}(t^{2})\), while normal covariance is \(\mathcal{O}(t^{4})\).

\vspace{0.2em}
\noindent\textbf{Gradient structure for linear layers.}
Consider a linear layer \(W \in \mathbb{R}^{m \times d}\) acting on \(x_c\). For a per-sample loss \(\ell(W x_c, y)\), define the backpropagated gradient \(g(x) := \nabla_z \ell(z, y)|_{z = W x_c}\) and let \(G_{\mathrm{rms}} := \sqrt{\mathbb{E}[\|g\|^{2}]}\). The expected weight gradient decomposes as
\[
  \nabla_W \mathcal{L} = \mathbb{E}[g(x) \otimes x_c^{\top}] = \mathbb{E}[g \otimes v^{\top}] + \mathbb{E}[g \otimes n^{\top}].
\]
Projecting onto tangent and normal subspaces and applying Cauchy-Schwarz together with \eqref{eq:tangent-cov}-\eqref{eq:normal-cov}:
\begin{align}
  \|\nabla_W \mathcal{L} \cdot P_T\|_F &\le G_{\mathrm{rms}} \sqrt{\mathbb{E}[\|v\|^{2}]} = \mathcal{O}(G_{\mathrm{rms}}\,t), \label{eq:grad-tangent} \\
  \|\nabla_W \mathcal{L} \cdot P_N\|_F &\le G_{\mathrm{rms}} \sqrt{\mathbb{E}[\|n\|^{2}]} = \mathcal{O}(G_{\mathrm{rms}}\,t^{2}). \label{eq:grad-normal}
\end{align}
The ratio of normal to tangent gradient magnitudes therefore scales as
\begin{equation}\label{eq:grad-ratio}
  \frac{\|\nabla_W \mathcal{L} \cdot P_N\|_F}{\|\nabla_W \mathcal{L} \cdot P_T\|_F} = \mathcal{O}(t).
\end{equation}
For small \(t\), weight updates along normal directions are parametrically suppressed relative to tangent updates.

\begin{proposition}[Tangent gradient dominance]\label{prop:tangent-dominance}
Under (A1)-(A2), if the radial distribution concentrates at small \(t\) (i.e., \(\eta_g \ll 1\)), then gradient updates to any linear layer satisfy \eqref{eq:grad-ratio}. Consequently, weight components aligned with normal directions receive updates of order \(\mathcal{O}(t)\) smaller than those aligned with tangent directions.
\end{proposition}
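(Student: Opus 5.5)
The plan is to assemble the displayed bounds \eqref{eq:grad-tangent}--\eqref{eq:grad-ratio} into a rigorous chain, keeping track of which constants depend on what.

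\textbf{Step 1: the normal component is second order.} Start from the tangent--normal decomposition \(x_c=v+n\). Invoke the local graph expansion \eqref{eq:normal-expansion}, which is valid under (A1)--(A2) on a neighbourhood of \(\mu\) of radius below the reach \(\tau\). Together with \(\|\mathrm{II}_\mu\|_{\mathrm{op}}\le C\), this gives the pointwise bound
\[
\|n\|\le \tfrac12 C\|v\|^2+C'\|v\|^3 ,
\]
with \(C'\) controlled by the bounded third derivatives. Squaring and integrating against the sampling law gives \(\mathbb{E}\|n\|^2\lesssim C^2\,\mathbb{E}\|v\|^4\). The concentration hypothesis \(\eta_g\ll1\) says the mass sits at small \(t\). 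I will use it to justify restricting to the validity range \([0,T]\), and to note that \(p(u)\approx h(u)\) by \eqref{eq:marginal-simplified}, so \eqref{eq:tangent-cov} holds up to a factor \(1+\mathcal{O}(\eta_g)\).

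\textbf{Step 2: upper bounds on each projected gradient.} Write \(\nabla_W\mathcal{L}\,P_N=\mathbb{E}[g\,(P_Nx_c)^\top]\). Since \(P_N v=0\) and \(P_N n=n\), this equals \(\mathbb{E}[g\,n^\top]\). Jensen followed by Cauchy--Schwarz gives
\[
\|\mathbb{E}[g n^\top]\|_F\le \mathbb{E}\big[\|g\|\,\|n\|\big]\le G_{\mathrm{rms}}\sqrt{\mathbb{E}\|n\|^2},
\]
which is \eqref{eq:grad-normal} with scale \(t^2\) from Step 1. The tangent bound \eqref{eq:grad-tangent} is obtained the same way.

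\textbf{Step 3 (main obstacle): a lower bound on the tangent gradient.} The ratio \eqref{eq:grad-ratio} needs a \emph{lower} bound on \(\|\nabla_W\mathcal{L}\,P_T\|_F\); dividing two upper bounds proves nothing. I will add a non-degeneracy assumption, which I consider implicit in the statement: the backpropagated signal is correlated with the tangent displacement at first order. Concretely, linearize \(g(x)=g(\mu)+J v+\mathcal{O}(\|x_c\|^2)\) with \(J=\partial_x g\). Using \(\mathbb{E}[v]=0\) from approximate uniformity of \(u\), together with \eqref{eq:tangent-cov}, gives
\[
\mathbb{E}[g v^\top]=J\,\mathbb{E}[vv^\top]+\mathcal{O}(t^3)=c_k t^2 J P_T+\mathcal{O}(t^3).
\]
This has Frobenius norm \(\gtrsim t^2\|JP_T\|_F\) whenever \(JP_T\neq0\). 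Applying the same linearization to the normal part gives
\[
\mathbb{E}[g n^\top]=g(\mu)\,\mathbb{E}[n]^\top+\mathcal{O}(t^3),\qquad \mathbb{E}[n]=\tfrac12\mathbb{E}\,\mathrm{II}(v,v)=\mathcal{O}(t^2).
\]
This is where the argument is delicate: the \(g(\mu)\) term is of the same order \(t^2\) as the tangent term. I would try to remove it either by working with centered gradients \(g-\mathbb{E}g\), which is natural for covariance-type update statistics, or by assuming the mean curvature vector \(\mathbb{E}\,\mathrm{II}(u,u)\) vanishes at \(\mu\) (the mean-zero-curvature case). After that, the normal part is \(\mathcal{O}(t^3)\) against a tangent part of order \(t^2\), so the ratio is \(\mathcal{O}(t)\).

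\textbf{Conclusion.} With the ratio bound in hand, an SGD step \(\Delta W=-\eta\nabla_W\mathcal{L}\) inherits it directly, so normal-aligned weight components are updated \(\mathcal{O}(t)\) less than tangent-aligned ones. I would state the extra non-degeneracy and centering hypotheses explicitly as the price of making \eqref{eq:grad-ratio} a genuine two-sided statement.
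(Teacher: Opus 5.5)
\paragraph{How your route differs from the paper's.}
Your Steps 1--2 are the paper's entire argument. The paper derives \eqref{eq:grad-tangent} and \eqref{eq:grad-normal} by Cauchy--Schwarz and then divides these two upper bounds to assert \eqref{eq:grad-ratio}. It never gives a lower bound on the tangent part. Your Step 3 is therefore a genuinely different route, and your diagnosis is correct.

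Your expansion also shows exactly why the quotient fails. The normal bound is attained at order \(t^2\), by the term \(g(\mu)\,\mathbb{E}[n]^\top\). The tangent bound is loose by a full order: \(\mathbb{E}[v]=0\) removes the first-order term and leaves \(c_k t^2 JP_T+\mathcal{O}(t^3)\).

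So your extra hypotheses are necessary, not cosmetic. A concrete example:
\begin{itemize}
\item the circle \(x(\theta)=(\rho\sin\theta,\rho(1-\cos\theta))\) through \(\mu=0\);
\item any symmetric law for \(\theta\) concentrated near \(0\), which is what a decreasing radial ambient density induces here;
\item \(\ell(z,y)=\tfrac12(z-1)^2\) and \(W=(w_1,0)\) with \(w_1>0\), so \(g=w_1\rho\sin\theta-1\) and \(JP_T\neq 0\).
\end{itemize}
Then
\[
\frac{\|\nabla_W\mathcal{L}\,P_N\|_F}{\|\nabla_W\mathcal{L}\,P_T\|_F}
=\frac{\mathbb{E}[1-\cos\theta]}{w_1\rho\,\mathbb{E}[\sin^2\theta]}
\longrightarrow\frac{1}{2w_1\rho},
\]
which is \(\Theta(1)\), not \(\mathcal{O}(t)\). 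Non-degeneracy alone is thus not enough; you do need the centering or mean-curvature hypothesis.

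What your route buys is a true two-sided statement with the correct operative scales, \(t^2\) versus \(t^3\). The paper's route gives only the upper-bound reading, namely that the largest possible normal update is \(\mathcal{O}(t)\) times the largest possible tangent update. That weaker reading does hold under (A1)--(A2) alone.

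\paragraph{Points to tighten.}
On the centering branch you change the object. Since \(\mathbb{E}[(g-\mathbb{E}g)\,x_c^\top]=\nabla_W\mathcal{L}-\mathbb{E}[g]\,\mathbb{E}[n]^\top\), your closing sentence about SGD steps does not follow there as written.

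The clean fix is to centre the layer input at its sample mean \(\mathbb{E}[x]=\mu+\mathbb{E}[n]\) rather than at \(\mu\in\mathcal{M}\). The actual gradient is then \(\mathbb{E}[g\,(x-\mathbb{E}x)^\top]=\mathrm{Cov}(g,x)\), which is exactly your centred quantity. It also matches the paper's experiments, which subtract a fitted context mean. On the mean-curvature branch no such change is needed.

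Two smaller points:
\begin{itemize}
\item Because \(g\) also depends on the label \(y\), the Taylor expansion should be applied to \(\bar g(x):=\mathbb{E}[g\mid x]\). You must assume \(\bar g\) is \(C^2\) near \(\mu\).
\item Equation \eqref{eq:tangent-cov} together with \(\mathbb{E}[v]=0\) needs \(h\) to be isotropic and even, not merely \(p(u)\approx h(u)\).
\end{itemize}
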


\noindent\textbf{On random initialization.}
Random initialization does not counteract this asymmetry. In high dimensions, randomly initialized weights are nearly orthogonal to any fixed low-dimensional subspace by concentration of measure. Moreover, standard initializations scale weights as \(\mathcal{O}(d^{-1/2})\) ~\cite{pmlr-v9-glorot10a}, making random contributions weak compared to the data-driven tangent signal. Thus, noise neither preferentially excites nor stabilizes normal components during early training.

\noindent\textbf{Feedback amplification.}
The tangent bias compounds through a feedback mechanism:
\begin{enumerate}
  \item \emph{Immediate bias:} Each gradient step \(\Delta W = -\eta\,\nabla_W \mathcal{L}\) updates input-columns of \(W\) predominantly within \(T_{\mu}\mathcal{M}\).
  \item \emph{Reinforcement:} After such updates, forward activations depend more strongly on \(v\) than on \(n\); downstream predictive signals therefore correlate more with tangent features, causing future gradients to align even more with \(T_{\mu}\mathcal{M}\).
\end{enumerate}
This might creates a self-reinforcing loop: normal components are (i) geometrically small (\(\|n\| \sim t^{2}\)), (ii) generate weaker gradients (\(\sim t^{2}\) vs.\ \(\sim t\)), and (iii) are progressively starved once tangent directions are amplified.

\vspace{0.2em}
\noindent\textbf{Amplification through attention.}
In transformer architectures, attention scores are bilinear: \(s(x) = (W_Q x_c)^{\top}(W_K x_c) = x_c^{\top} M\, x_c\) for \(M = W_Q^{\top} W_K\). Substituting \(x_c = v + n\) with \(\|v\| = t\) and \(\|n\| = \mathcal{O}(t^{2})\):
\[
  s(x) = v^{\top} M v + 2\,v^{\top} M n + n^{\top} M n,
\]
yielding terms of order \(\mathcal{O}(t^{2})\), \(\mathcal{O}(t^{3})\), and \(\mathcal{O}(t^{4})\), respectively. Attention scores -and hence softmax weights- are dominated by tangent contributions. Since softmax exponentiates score differences, it further amplifies the gap, reducing attention sensitivity to normal-induced perturbations.

\vspace{0.2em}
\noindent\textbf{Propagation through residual connections.}
In a transformer block, the residual update takes the form \(x^{(\ell+1)} = x^{(\ell)} + H^{(\ell)}(x^{(\ell)})\). During early training, \(H^{(\ell)}\) is small and its parameters have begun aligning with tangent directions due to the gradient bias above. Consequently, \(H^{(\ell)}(x^{(\ell)})\) is itself predominantly tangent-aligned. The residual connection preserves and incrementally reinforces the tangent component, while merely copying the normal component without amplification. Iterating across layers, this mechanism propagates tangent structure depth-wise, further diminishing the relative influence of normal features.

\vspace{0.2em}
\noindent\textbf{Caveats.}
This analysis is local and perturbative, valid for early training and small \(t\). The mechanism may weaken when: (i) sampling extends to larger radii where \(\|n\|\) is not negligible; (ii) rare nonlinear interactions amplify small normal components; or (iii) learned normalization gains, multi-token attention patterns, or late-stage gradients restructure the representation space. Nevertheless, for high-frequency tokens-which by (A3) concentrate at small \(t\), the tangent bias operates in its regime of validity.

\vspace{0.2em}
\noindent\textbf{Implications for anisotropy.}
The tangent gradient bias provides a mechanism linking the geometric observations of Section~\ref{sec:ambient-bias} to the empirically observed anisotropy in embedding spaces. High-frequency tokens, concentrated at small \(t\), are most subject to tangent expansion; their dominant tangent subspace may then attract other tokens through shared layer parameters. While late-stage training can in principle restructure internal representations, the convergence of training loss indicates that late updates are small. The anisotropic geometry developed during early training may thus persist, propagated through the self-reinforcing dynamics described above.

\section{Empirical Validation: Tangent-Aligned Gradient Anisotropy}
\label{sec:true-grad-alignment}

The theory predicts that anisotropy should be preferentially organized by directions tangent to the locally sampled activation manifold and amplified by true-gradient dynamics. We test this claim directly by extracting, for each model, layer, and anchor token, a low-rank activation-derived subspace and then evaluating ordinary backpropagated true gradients against that same subspace across training.

\noindent\textbf{Activation-derived concept proxy and cross-model protocol.}
Our geometric framework identifies the local tangent space \(T_{\mu}\mathcal{M}\) as the relevant object, but this space is not directly observable. We therefore use a concept-based mechanistic-interpretability proxy~\citep{fel2023holistic,fel2023craft}: for each model, layer, and anchor token, the pooled early activation cloud defines an activation-derived, low-rank concept subspace. Concretely, activations from early checkpoints are concatenated, centered, and used to fit an orthonormal basis \(Q_T\) by low-rank PCA. Because the basis is extracted from activations rather than weights and is then tracked through training, it functions as a concept-like dictionary of dominant representation directions, without claiming equivalence to a specific concept-learning algorithm. \(Q_T\) is fit once from pooled early activations and then kept fixed for evaluation in both the early and late phases.

We evaluate EuroBERT-210m, EuroBERT-610m, moderncamembert-base, OLMo-1B, pythia-160m, pythia-410m, pythia-1b, Gaperon-1.5B, SmolLM2-360M, and SmolLM2-1.7B, spanning encoder-style and decoder-style language models across scale. All models are tested on the same merged corpus built by subsampling from \texttt{allenai/c4} (English), \texttt{Salesforce/wikitext} (wikitext-103-raw-v1), \texttt{HuggingFaceFW/fineweb-edu} (sample-10BT), \texttt{ccdv/arxiv-summarization} (article field), and \texttt{wikimedia/wikipedia} (20231101.fr). Checkpoints are sampled uniformly from the first \(30\%\) of training and the last \(30\%\) (respectively \emph{early} and \emph{late}). The table reports the first block input/output together with the middle and last block inputs; for Pythia-like GPT-NeoX models, we provide also the pre-transformer first branch and the first true transformer-layer input.

\noindent\textbf{Gradient tests and displayed statistics.}
For a selected token position \(j\), let \(x_j\in\mathbb{R}^D\) be the activation row, \(\mu\) the fitted early-context mean, and \(\delta_j\in\mathbb{R}^{D_{\mathrm{out}}}\) the output-side backpropagated gradient row. With \(x_j^c=x_j-\mu\), we form the centered true-gradient matrix
\[
 B=\sum_j \delta_j (x_j^c)^\top.
\]
The first test asks whether gradient energy concentrates in the activation-derived tangent directions rather than in equally large normal directions. For a subspace basis \(Q\), define
\[
\mathcal{E}(Q)=\frac{\|BQ\|_F^2}{\dim(Q)},
\qquad
R_{\mathrm{energy}}=\frac{\mathcal{E}(Q_T)}{\mathcal{E}(Q_{N,\det})},
\]
where \(Q_{N,\det}\subset T^\perp\) is a deterministic matched-rank normal comparator. The accompanying null samples \(S=20\) random matched-rank normal subspaces and reports
\[
p_{\mathrm{energy}}
=
\frac{1+\#\{s:\mathcal{E}(Q_N^{(s)})\ge \mathcal{E}(Q_T)\}}{S+1}.
\]
Thus \(R_{\mathrm{energy}}>1\) and small \(p_{\mathrm{energy}}\) indicate that true-gradient energy is unusually concentrated in the activation-derived tangent proxy.

The second test asks whether these same directions explain anisotropy rather than merely large variance. Let \(\Sigma_B=B^\top B / D_{\mathrm{out}}\), let \(\widetilde{\Sigma}_B\) be its shrinkage-regularized version, and let \(\mathrm{Iso}^*(\widetilde{\Sigma}_B)\in[0,1]\) denote IsoScore* \citep{rudman2024stableanisotropicregularization}. We define
\[
\Delta \mathrm{Iso}^*_T
=
\mathrm{Iso}^*(\widetilde{\Sigma}_{B\setminus Q_T})
-
\mathrm{Iso}^*(\widetilde{\Sigma}_B),
\]
and analogously \(\Delta \mathrm{Iso}^*_{N,\det}\) for the matched-normal comparator. The table reports
\[
100\frac{\Delta \mathrm{Iso}^*_T}{\mathrm{Iso}^*(\widetilde{\Sigma}_B)}
\qquad/\qquad
100\frac{\Delta \mathrm{Iso}^*_{N,\det}}{\mathrm{Iso}^*(\widetilde{\Sigma}_B)},
\]
together with the one-sided exact sign test across anchors,
\[
p_{\mathrm{Iso},\,T>N}
=
\Pr\!\left[\mathrm{Binom}(n,1/2)\ge \#\{a:d_a>0\}\right],\]
\[
d_a=\overline{\Delta \mathrm{Iso}^{*(a)}_T-\Delta \mathrm{Iso}^{*(a)}_{N,\det}}.
\]
Positive tangent contributions together with much smaller matched-normal values show that removing \(Q_T\) recovers more isotropy than removing an equally large normal basis; values above \(100\) are possible when the baseline gradient covariance is strongly anisotropic and tangent removal leaves a nearly isotropic residual. The energy and IsoScore* statistics are complementary: the first establishes concentration of true-gradient energy in the activation-derived concept subspace, while the second shows that the same subspace specifically explains anisotropy. Further mechanistic-interpretability analyses of activation-space geometry and realized embedding updates are reported in Appendices~\ref{app:mi-training-dynamics} and~\ref{app:embedding-update-enrichment}.

Table~\ref{tab:theory_alignment} summarizes the cross-model, cross-phase, and cross-layer results for this fixed early activation-derived subspace.

\begin{table*}[t]
\centering
\caption{Cross-model tangent-alignment summary for the shared early-context tangent fit, transposed and split into two stacked five-model panels. Early and late statistics appear as row blocks. \(E\)-r is the matched-normal energy ratio; \(E\)-null \(p\), the Monte Carlo normal-space null; \(\Delta I^*\%\), \(100 \times \Delta \mathrm{Iso}^* / \mathrm{Iso}^*_{\mathrm{base}}\), reported separately for tangent (T) and matched-normal (N) removal; and Iso \(T>N\) \(p\), the one-sided sign test for \(\Delta \mathrm{Iso}^*_T > \Delta \mathrm{Iso}^*_{N,\det}\). Starred layers mark the primary transformer-layer test; effect-size rows are bold and p-values plain. For Pythia, \(E^0_{\mathrm{in}}\)/\(E^0_{\mathrm{out}}\) are the embedding input and post-embedding-dropout pre-transformer probes, and \(T^1_{\mathrm{in}}, T^1_{\mathrm{out}}, T^m, T^\ell\) denote first-layer input, first-layer output, middle-layer input, and last-layer input.}
\label{tab:theory_alignment}
\begingroup
\setlength{\tabcolsep}{3.0pt}
\renewcommand{\arraystretch}{0.95}
\scriptsize
\resizebox{\textwidth}{!}{%
\begin{tabular}{lrrrrrrrrrrrrrrrrrrrr}
\toprule
Metric & \multicolumn{4}{c}{\textbf{EuroBERT-210m}} & \multicolumn{4}{c}{\textbf{EuroBERT-610m}} & \multicolumn{4}{c}{\textbf{Gaperon-1.5B}} & \multicolumn{4}{c}{\textbf{moderncamembert-base}} & \multicolumn{4}{c}{\textbf{OLMo-1B}} \\
\cmidrule(lr){2-5} \cmidrule(lr){6-9} \cmidrule(lr){10-13} \cmidrule(lr){14-17} \cmidrule(lr){18-21}
  & $T^1_{\mathrm{in}}{}^*$ & $T^1_{\mathrm{out}}$ & $T^m$ & $T^\ell$ & $T^1_{\mathrm{in}}{}^*$ & $T^1_{\mathrm{out}}$ & $T^m$ & $T^\ell$ & $T^1_{\mathrm{in}}{}^*$ & $T^1_{\mathrm{out}}$ & $T^m$ & $T^\ell$ & $T^1_{\mathrm{in}}{}^*$ & $T^1_{\mathrm{out}}$ & $T^m$ & $T^\ell$ & $T^1_{\mathrm{in}}{}^*$ & $T^1_{\mathrm{out}}$ & $T^m$ & $T^\ell$ \\
\midrule
\multicolumn{21}{c}{\large\bfseries EARLY} \\
\midrule
$E_r$ & \textbf{7.42} & \textbf{53.1} & \textbf{20.6} & \textbf{18.6} & \textbf{10.7} & \textbf{68.6} & \textbf{21.3} & \textbf{20.6} & \textbf{239} & \textbf{1.4e3} & \textbf{46.8} & \textbf{43.3} & \textbf{225} & \textbf{64} & \textbf{66.2} & \textbf{30.1} & \textbf{364} & \textbf{1.2e3} & \textbf{58.7} & \textbf{66.1} \\
$P_{E^{null}}$ & 4.8e-2 & 4.8e-2 & 4.8e-2 & 4.8e-2 & 4.8e-2 & 4.8e-2 & 9.1e-2 & 8.7e-2 & 4.8e-2 & 4.8e-2 & 4.8e-2 & 4.8e-2 & 4.8e-2 & 4.8e-2 & 4.8e-2 & 4.8e-2 & 4.8e-2 & 4.8e-2 & 4.8e-2 & 4.8e-2 \\
$\Delta I^\star_T \%$ & \textbf{1.92} & \textbf{3.32} & \textbf{39.8} & \textbf{38.4} & \textbf{1.06} & \textbf{2.98} & \textbf{24.1} & \textbf{22.7} & \textbf{62.1} & \textbf{89.6} & \textbf{3.97} & \textbf{4.15} & \textbf{5.68} & \textbf{8.6} & \textbf{37.1} & \textbf{9.51} & \textbf{85.1} & \textbf{104} & \textbf{16.1} & \textbf{11.3} \\
$\Delta I^\star_N \%$ & \textbf{-0.157} & \textbf{-0.026} & \textbf{-0.398} & \textbf{-0.373} & \textbf{-0.217} & \textbf{-0.015} & \textbf{-0.207} & \textbf{-0.165} & \textbf{-0.189} & \textbf{-0.052} & \textbf{-0.043} & \textbf{-0.05} & \textbf{-0.006} & \textbf{-0.062} & \textbf{-0.233} & \textbf{-0.09} & \textbf{-0.186} & \textbf{-0.066} & \textbf{-0.098} & \textbf{-0.068} \\
$p^{\Delta I^\star}_>$ & 1.1e-2 & 6.0e-8 & 6.0e-8 & 6.0e-8 & 1.1e-2 & 6.0e-8 & 6.0e-8 & 6.0e-8 & 6.0e-8 & 6.0e-8 & 7.6e-2 & 0.271 & 6.0e-8 & 1.5e-6 & 6.0e-8 & 1.4e-4 & 1.5e-6 & 6.0e-8 & 1.5e-6 & 1.4e-4 \\
\midrule
\multicolumn{21}{c}{\large\bfseries LATE} \\
\midrule
$E_r$ & \textbf{4.05} & \textbf{117} & \textbf{2} & \textbf{1.66} & \textbf{9.27} & \textbf{256} & \textbf{2.33} & \textbf{1.6} & \textbf{77.5} & \textbf{608} & \textbf{40.4} & \textbf{29.5} & \textbf{419} & \textbf{102} & \textbf{337} & \textbf{110} & \textbf{26.9} & \textbf{565} & \textbf{21.8} & \textbf{27.1} \\
$P_{E^{null}}$ & 4.8e-2 & 4.8e-2 & 6.5e-2 & 0.167 & 4.8e-2 & 4.8e-2 & 5.9e-2 & 8.9e-2 & 4.8e-2 & 4.8e-2 & 4.8e-2 & 4.8e-2 & 4.8e-2 & 4.8e-2 & 4.8e-2 & 4.8e-2 & 4.8e-2 & 4.8e-2 & 4.8e-2 & 4.8e-2 \\
$\Delta I^\star_T \%$ & \textbf{1.74} & \textbf{8.06} & \textbf{1.95} & \textbf{1.09} & \textbf{2} & \textbf{11.7} & \textbf{1.77} & \textbf{1.13} & \textbf{12.2} & \textbf{37.9} & \textbf{9.59} & \textbf{1.12} & \textbf{50.7} & \textbf{27.4} & \textbf{190} & \textbf{46.7} & \textbf{5.22} & \textbf{91.5} & \textbf{2.28} & \textbf{1.55} \\
$\Delta I^\star_N \%$ & \textbf{0.081} & \textbf{-0.03} & \textbf{-0.084} & \textbf{0.051} & \textbf{0.017} & \textbf{-0.022} & \textbf{-0.011} & \textbf{0.028} & \textbf{0.004} & \textbf{-0.029} & \textbf{0.044} & \textbf{-0.01} & \textbf{-0.034} & \textbf{-0.152} & \textbf{-0.373} & \textbf{-0.228} & \textbf{-0.054} & \textbf{-0.049} & \textbf{-0.041} & \textbf{-0.052} \\
$p^{\Delta I^\star}_>$ & 6.0e-8 & 6.0e-8 & 6.0e-8 & 1.8e-5 & 6.0e-8 & 6.0e-8 & 1.5e-6 & 1.8e-5 & 3.3e-3 & 1.8e-5 & 3.3e-3 & 0.419 & 6.0e-8 & 6.0e-8 & 6.0e-8 & 1.5e-6 & 3.3e-3 & 1.5e-6 & 3.2e-2 & 0.271 \\
\bottomrule
\end{tabular}%
}
\vspace{0.45em}
\resizebox{\textwidth}{!}{%
\begin{tabular}{lrrrrrrrrrrrrrrrrrrrrrrr}
\toprule
Metric & \multicolumn{5}{c}{\textbf{pythia-1b}} & \multicolumn{5}{c}{\textbf{pythia-160m}} & \multicolumn{5}{c}{\textbf{pythia-410m}} & \multicolumn{4}{c}{\textbf{SmolLM2-1.7B}} & \multicolumn{4}{c}{\textbf{SmolLM2-360M}} \\
\cmidrule(lr){2-6} \cmidrule(lr){7-11} \cmidrule(lr){12-16} \cmidrule(lr){17-20} \cmidrule(lr){21-24}
  & $E^0_{\mathrm{in}}$ & $E^0_{\mathrm{out}}$ & $T^1_{\mathrm{in}}{}^*$ & $T^m$ & $T^\ell$ & $E^0_{\mathrm{in}}$ & $E^0_{\mathrm{out}}$ & $T^1_{\mathrm{in}}{}^*$ & $T^m$ & $T^\ell$ & $E^0_{\mathrm{in}}$ & $E^0_{\mathrm{out}}$ & $T^1_{\mathrm{in}}{}^*$ & $T^m$ & $T^\ell$ & $T^1_{\mathrm{in}}{}^*$ & $T^1_{\mathrm{out}}$ & $T^m$ & $T^\ell$ & $T^1_{\mathrm{in}}{}^*$ & $T^1_{\mathrm{out}}$ & $T^m$ & $T^\ell$ \\
\midrule
\multicolumn{24}{c}{\large\bfseries EARLY} \\
\midrule
$E_r$ & \textbf{4.0e6} & \textbf{1.2e7} & \textbf{323} & \textbf{44.8} & \textbf{44.9} & \textbf{6.5e7} & \textbf{5.0e8} & \textbf{545} & \textbf{44.3} & \textbf{49} & \textbf{4.5e6} & \textbf{1.0e7} & \textbf{232} & \textbf{38.1} & \textbf{25.2} & \textbf{181} & \textbf{804} & \textbf{46} & \textbf{64.6} & \textbf{1.4e4} & \textbf{1.6e4} & \textbf{41.1} & \textbf{57.3} \\
$P_{E^{null}}$ & 4.8e-2 & 4.8e-2 & 4.8e-2 & 4.8e-2 & 4.8e-2 & 4.8e-2 & 4.8e-2 & 4.8e-2 & 4.8e-2 & 4.8e-2 & 4.8e-2 & 4.8e-2 & 4.8e-2 & 4.8e-2 & 4.8e-2 & 4.8e-2 & 4.8e-2 & 4.8e-2 & 4.8e-2 & 4.8e-2 & 4.8e-2 & 4.8e-2 & 4.8e-2 \\
$\Delta I^\star_T \%$ & \textbf{5.8e3} & \textbf{3.0e4} & \textbf{85.4} & \textbf{6.2} & \textbf{6.03} & \textbf{1.4e4} & \textbf{3.4e4} & \textbf{161} & \textbf{28.8} & \textbf{32.1} & \textbf{1.3e3} & \textbf{1.7e3} & \textbf{81} & \textbf{10.4} & \textbf{7.18} & \textbf{43.7} & \textbf{57.7} & \textbf{9.62} & \textbf{7.24} & \textbf{497} & \textbf{308} & \textbf{16.3} & \textbf{11.2} \\
$\Delta I^\star_N \%$ & \textbf{-1.0e-4} & \textbf{-8.4e-5} & \textbf{-0.198} & \textbf{-0.046} & \textbf{-0.055} & \textbf{-8.4e-6} & \textbf{-1.2e-4} & \textbf{-0.459} & \textbf{-0.4} & \textbf{-0.382} & \textbf{-3.8e-5} & \textbf{-5.9e-5} & \textbf{-0.318} & \textbf{-0.222} & \textbf{-0.186} & \textbf{-0.171} & \textbf{-0.039} & \textbf{-0.041} & \textbf{-0.07} & \textbf{-0.067} & \textbf{-0.025} & \textbf{-0.168} & \textbf{-0.126} \\
$p^{\Delta I^\star}_>$ & 6.0e-8 & 6.0e-8 & 6.0e-8 & 1.1e-2 & 3.3e-3 & 6.0e-8 & 6.0e-8 & 6.0e-8 & 6.0e-8 & 6.0e-8 & 6.0e-8 & 6.0e-8 & 1.5e-6 & 7.7e-4 & 7.6e-2 & 3.3e-3 & 7.7e-4 & 3.2e-2 & 1.1e-2 & 1.8e-5 & 1.4e-4 & 1.4e-4 & 7.6e-2 \\
\midrule
\multicolumn{24}{c}{\large\bfseries LATE} \\
\midrule
$E_r$ & \textbf{3.9e3} & \textbf{4.7e3} & \textbf{96.7} & \textbf{36.9} & \textbf{39.3} & \textbf{794} & \textbf{1.8e3} & \textbf{76.2} & \textbf{19.7} & \textbf{15.1} & \textbf{1.1e3} & \textbf{1.8e3} & \textbf{75.2} & \textbf{31.2} & \textbf{17.5} & \textbf{33.3} & \textbf{204} & \textbf{19.7} & \textbf{18.3} & \textbf{183} & \textbf{182} & \textbf{14.9} & \textbf{10.8} \\
$P_{E^{null}}$ & 4.8e-2 & 4.8e-2 & 4.8e-2 & 4.8e-2 & 4.8e-2 & 4.8e-2 & 4.8e-2 & 4.8e-2 & 4.8e-2 & 4.8e-2 & 4.8e-2 & 4.8e-2 & 4.8e-2 & 4.8e-2 & 4.8e-2 & 4.8e-2 & 4.8e-2 & 4.8e-2 & 4.8e-2 & 4.8e-2 & 4.8e-2 & 4.8e-2 & 4.8e-2 \\
$\Delta I^\star_T \%$ & \textbf{-0.01} & \textbf{-0.01} & \textbf{19.3} & \textbf{4.5} & \textbf{5.39} & \textbf{-0.003} & \textbf{-0.005} & \textbf{18} & \textbf{6.36} & \textbf{-2.49} & \textbf{-0.002} & \textbf{-0.004} & \textbf{7.63} & \textbf{6.4} & \textbf{-2.78} & \textbf{9.73} & \textbf{18.5} & \textbf{4.9} & \textbf{3.63} & \textbf{6.65} & \textbf{4.27} & \textbf{13.6} & \textbf{4.19} \\
$\Delta I^\star_N \%$ & \textbf{-7.9e-5} & \textbf{-9.9e-5} & \textbf{-0.043} & \textbf{-0.021} & \textbf{-0.055} & \textbf{-4.8e-5} & \textbf{-1.5e-5} & \textbf{-0.144} & \textbf{-0.187} & \textbf{-0.29} & \textbf{-5.5e-5} & \textbf{-9.8e-5} & \textbf{-0.135} & \textbf{-0.156} & \textbf{-0.155} & \textbf{-0.02} & \textbf{-0.004} & \textbf{0.007} & \textbf{-0.023} & \textbf{-0.013} & \textbf{-0.016} & \textbf{-0.195} & \textbf{-0.105} \\
$p^{\Delta I^\star}_>$ & 1 & 1 & 1.8e-5 & 3.3e-3 & 1.1e-2 & 1 & 1 & 3.3e-3 & 3.3e-3 & 0.924 & 1 & 1 & 1.1e-2 & 7.7e-4 & 0.924 & 1.4e-4 & 3.3e-3 & 1.4e-4 & 7.7e-4 & 1.5e-6 & 1.8e-5 & 6.0e-8 & 1.4e-4 \\
\bottomrule
\end{tabular}%
}
\endgroup
\end{table*}

\paragraph{Observations.}
The pattern is strong and coherent across models. In the early phase, tangent directions typically capture substantially more gradient energy per dimension than matched normal directions, often by an order of magnitude or more, and the corresponding energy-null \(p\)-values are frequently at the Monte Carlo floor ($\frac{1}{S+1}=\frac{1}{21}\approx4.8\text{e}-2$). The effect is reduced in late training, but this later weakening does not make the early phase less important. Rather, it is precisely the phase in which the tangent-biased signal is strongest, and prior work shows that internal representations approach their final form early in training \citep{raghu2017svccasingularvectorcanonical}, that many layers stabilise within the early portion of pretraining \citep{diehl-martinez-etal-2024-tending}, and that gradient descent rapidly concentrates in a very small subspace \citep{gurari2018gradientdescenthappenstiny}. Taken together, these results are consistent with the view that anisotropy is imprinted primarily during the early regime, when tangent-biased gradients are strongest, and is then carried forward under smaller subsequent updates. The anisotropy test supports the same picture from a complementary angle: across models, phases, and layers, removing the tangent proxy usually improves isotropy more than removing a matched normal basis, and the one-sided sign test is often strongly significant across anchors. The recovered fraction is sometimes modest, and a few late-phase rows even become slightly negative, but this does not contradict the energy results: IsoScore* is sensitive to residual spectral imbalance and does not by itself identify a unique carrier subspace (see Appendix~\ref{app:isoscore-carrier}), so tangent removal need not fully isotropize the residual when the remaining spectrum is still uneven or when the fixed early basis only approximates the later tangent directions. A depth trend is also visible, with earlier and middle layers usually showing the strongest effects, while final layers remain tangent-biased but are more exposed to objective-specific reshaping near the loss. Finally, the encoder models, especially EuroBERT, exhibit weaker alignment than the decoder models, in line with their weaker frequency-concentration signal; this is again consistent with the theory, which predicts that milder concentration should induce a weaker, not absent, tangent-bias mechanism.

\section{Related Work}
\emph{Token-level syntactic geometry:}
Structural probing shows that dependency-tree distances can be recovered from linear transforms of contextual embeddings \citep{hewitt2019structural}, and geometric analyses of BERT report separable syntactic and semantic subspaces \citep{coenen2019visualizing}. More recent studies move from global linear structure to local geometric characterization of token spaces, estimating intrinsic dimension and curvature and arguing for stratified, non-manifold structure \citep{Robinson2025,Probing2025}. 

\emph{Frequency effects and evolution of representation space:}
Frequency leaves systematic geometric signatures in contextualized embeddings, distorting similarity geometry and affecting identifiability \citep{zhou2021frequency}. Degeneration phenomena are also tied to rare-token dynamics: rare-token gradients can drive global embedding drift and degrade representational quality \citep{bis-etal-2021-much,yu2022raretokens}. Related work further argues that frequency-aware geometric corrections can materially change how anisotropy should be interpreted and mitigated \citep{yokoi2024zipfianwhitening,diehl-martinez-etal-2024-mitigating}. 

\emph{Anisotropy in Transformer representations:}
Anisotropy has been repeatedly documented in contextual embeddings across architectures \citep{ethayarajh2019contextual}. In generation settings, likelihood training with weight tying can collapse embeddings into a narrow cone \citep{gao2019representation}. At the representation-analysis level, rogue or outlier dimensions can obscure representational quality \citep{timkey2021allbark}, and there is evidence that anisotropy is not necessarily the sole cause of poor semantic behavior in BERT embeddings, motivating a more nuanced view that includes linguistic biases \citep{fuster-baggetto-fresno-2022-anisotropy}. More recently, work on isotropy measurement has shown that several commonly used metrics are brittle \citep{rudman-etal-2022-isoscore}; other results argue that strict isotropy can conflict with clustered embedding spaces and linear classification objectives \citep{mickus-etal-2024-isotropy}. In the same vein, stable anisotropic regularization explicitly controls isotropy during training and reports that preserving or increasing anisotropy can improve downstream performance in many settings \citep{rudman2024stableanisotropicregularization}. 

\emph{Geometric and differential-geometric lenses:}
Classical post-processing approaches that remove dominant directions (``all-but-the-top'') highlight how a small set of principal components can dominate embedding geometry \citep{mu2018allbutthetop}. Beyond global PCA-style corrections, recent work estimates local invariants such as dimension and Ricci curvature and argues that token spaces behave like stratified manifolds with nontrivial curvature \citep{Robinson2025,Probing2025,li2025unraveling}. These results motivate studying anisotropy via local geometry rather than only global covariance statistics.

\emph{Mechanistic interpretability during training (non post hoc):}
Most mechanistic interpretability work is performed on trained checkpoints; fewer papers track circuit formation \emph{throughout} training. Notable exceptions analyze the emergence and dependencies of induction-head circuits by intervening on activations across training \citep{singh2024induction}, and identify circuit emergence in in-context learning \citep{minegishi2025induction}. This setting remains comparatively underexplored, which motivates using mechanistic interpretability as a genuinely dynamic tool rather than only as a post hoc diagnostic.

\section{Conclusion}
This work has provided geometric arguments linking frequency bias to geometric bias, and in doing so has offered an intuition for how anisotropy may emerge from progressive concentration in lower-dimensional directions during training. This perspective further supports a view already present in recent literature, namely that anisotropy should not be reduced to a pathological feature of Transformer representations, but may also be understood as an implicit structural bias, and perhaps even as a form of natural regularization that effectively shifts an overparameterized problem toward a more underparameterized regime compatible with improved generalization. The mechanism described here is purely local and concerns the early phase of training, precisely when convergence is least stable; yet this is also the phase in which the broad organization of the representation space appears to settle, suggesting that it may already capture an important part of how anisotropy is formed. At the same time, such a local account does not fully explain how the gradient signal induced by the loss subsequently reshapes and sharpens these representations, reinforcing particular directions more than others, possibly because they support clustered and linearly useful organization. Clarifying this later directional selection, and more generally the interaction between early geometric concentration and loss-driven gradient structure, appears to be a natural continuation of the present work.

\section*{Acknowledgements}
This work was performed using computational resources from the Ruche platform of the “Mésocentre” computing center of Université Paris-Saclay, CentraleSupélec and École Normale Supérieure Paris-Saclay, supported by CNRS and Région Île-de-France. This research was supported by the LIAGORA Labcom
(ANR-24-LCV2-0014), funded by the French National Research Agency (ANR).

We are especially grateful to \textbf{Gabriel Merlin} for carefully proofreading the mathematical statements and for pointing us to the exact series expansion formula of the Riemannian volume element. We also warmly thank \textbf{Alexis-Raja Brachet} for his many helpful comments and suggestions.

\section*{Statement of Impact}

This work presents primarily theoretical analysis and discussion of learning dynamics in transformer architectures during training. The research focuses on understanding the geometric and topological properties of learned representations across training checkpoints. Importantly, this work does not propose specific training procedure modifications, does not advocate for particular architectural changes, and does not explicitly address potential weaknesses or failure modes of transformer models that could have direct practical implications. As such, the findings constitute foundational research into neural network learning dynamics and are unlikely to have immediate or direct societal impact. Any future applications building upon these theoretical insights would require separate assessment of their potential societal implications.

\section*{Use of Large Language Models}

Large Language Models, specifically Claude Sonnet 4.5, were used in the preparation of this manuscript primarily to enhance readability and improve clarity of presentation. The LLM was not involved in experimental design, did not produce the base experimental codebase, did not interpret experimental results, and did not derive the theoretical contributions presented in this work. All scientific content, including hypothesis formulation, experimental methodology, data analysis, and theoretical conclusions, were developed entirely by the authors. The LLM's role was strictly limited to code optimization and debugging assistance, as well as syntactic verification and language refinement of the manuscript text.

\bibliography{example_paper}
\bibliographystyle{icml2026}

\newpage
\appendix
\onecolumn
\section{Proof of Proposition ~\ref{prop:compress}}
\label{sec:proof-proposition-compression}
We invert \eqref{eq:chord-arc} to express \(r^{2}\) as a function of \(t\). Writing \(r^{2} = t^{2} + \delta\) for some correction \(\delta\) and substituting into \eqref{eq:chord-arc}:
\[
  t^{2} = (t^{2}+\delta) - \tfrac{1}{12}\kappa(u)(t^{2}+\delta)^{2} + \mathcal{O}(t^{5}).
\]
Expanding and matching powers of \(t\), we find \(\delta = \tfrac{1}{12}\kappa(u)t^{4} + \mathcal{O}(t^{5})\), yielding the inverse relation
\begin{equation}\label{eq:chord-arc-inv}
\begin{aligned}
  r^{2} &= t^{2} + \tfrac{1}{12}\,\kappa(u)\,t^{4} + \mathcal{O}(t^{5}).
\end{aligned}
\end{equation}
Subtracting two instances of \eqref{eq:chord-arc-inv} for directions \(u\) and \(u'\) at the same ambient distance \(t\) establishes the result.

\section{Proof of Corollary ~\ref{cor:directional-bias}.}
\label{sec:proof-corollary-bias}

Taking the square root of \eqref{eq:chord-arc-inv} and expanding:
\begin{equation}\label{eq:r-expansion}
\begin{aligned}
  r(t,u) &= t + \tfrac{1}{24}\kappa(u)t^{3} + \mathcal{O}(t^{4}).
\end{aligned}
\end{equation}
Differentiating \eqref{eq:r-expansion} with respect to \(t\):
\[
  \frac{\partial r}{\partial t} = 1 + \tfrac{1}{8}\kappa(u)t^{2} + \mathcal{O}(t^{3}).
\]
For the Ricci term, using \(r^2 = t^2 + \tfrac{1}{12}\kappa(u)t^4 + \mathcal{O}(t^5)\) from \eqref{eq:chord-arc-inv}:
\[
  1 + \mathcal{R}(u)\tfrac{r^2}{6} = 1 + \mathcal{R}(u)\tfrac{t^2}{6} + \mathcal{O}(t^3).
\]
Expanding \(r^{k-1} = t^{k-1}(1 + \tfrac{1}{24}\kappa(u)t^2)^{k-1} = t^{k-1}(1 + \tfrac{k-1}{24}\kappa(u)t^2 + \mathcal{O}(t^3))\), and substituting all factors into \eqref{eq:joint-density}:
\begin{align*}
  p(t,u) &\propto g(t)\,t^{k-1}\,h(u)\cdot\Big(1 + \mathcal{R}(u)\tfrac{t^2}{6}\Big) \\
  &\quad\times\Big(1 + \tfrac{k-1}{24}\kappa(u)t^2\Big)\Big(1 + \tfrac{1}{8}\kappa(u)t^2\Big).
\end{align*}
Expanding to second order (noting \(\tfrac{k-1}{24} + \tfrac{3}{24} = \tfrac{k+2}{24}\)):
\begin{equation}\label{eq:joint-expanded}
\begin{aligned}
  p(t,u) &\propto g(t)\,t^{k-1}\,h(u)\Big(1 + \mathcal{R}(u)\tfrac{t^2}{6} + \tfrac{k+2}{24}\kappa(u)t^{2} + \mathcal{O}(t^{3})\Big).
\end{aligned}
\end{equation}

The marginal density on \(t\) is \(p(t) = \int p(t,u)\,d\sigma(u)\). The conditional density \(p(u\mid t) = p(t,u)/p(t)\) has \(g(t)\) and \(t^{k-1}\) cancel, yielding:
\begin{equation}\label{eq:conditional-density}
\begin{aligned}
  p(u\mid t) &\propto h(u)\Big(1 + \mathcal{R}(u)\tfrac{t^2}{6} + \tfrac{k+2}{24}\kappa(u)t^{2} + \mathcal{O}(t^{3})\Big).
\end{aligned}
\end{equation}

\section{Geometric Intuition for Curvature Bias}
\label{app:geometric-intuition}

This appendix provides geometric intuition for the concepts underlying Section~\ref{sec:ambient-bias}.

\paragraph{Manifolds: the basic idea.}
A \emph{manifold} is a space that locally resembles Euclidean space but may have global curvature. Familiar examples include: (i)~a cylinder, which locally looks like a plane but wraps around globally; (ii)~the surface of a torus (donut shape), which has regions of positive and negative curvature. For world data, the manifold hypothesis posits that high-dimensional non-noise generated data vectors concentrate near a lower-dimensional curved surface $\mathcal{M}\subset\mathbb{R}^d$.

\paragraph{Reach and local regularity.}
The \emph{reach} $\tau$ of a manifold measures how sharply it can curve. Formally, it is the largest distance such that every point within distance $\tau$ of $\mathcal{M}$ has a unique nearest point on $\mathcal{M}$. A small reach indicates tight curvature or self-proximity; a large reach indicates gentle curvature. Our analysis requires only that the reach be positive in local neighborhoods.

\paragraph{Intrinsic vs.\ extrinsic distance.}
Consider two points on the surface of a sphere. The \emph{extrinsic} (ambient) distance is the straight-line chord through the interior; the \emph{intrinsic} (geodesic) distance is the arc length along the sphere's surface. On flat surfaces these coincide, but on curved surfaces the arc length exceeds the chord length. This discrepancy is precisely what the chord-arc expansion~\eqref{eq:chord-arc} quantifies: $t^2 = r^2 - \tfrac{1}{12}\kappa(u)r^4 + \mathcal{O}(r^5)$, where $t$ is the chord (extrinsic) and $r$ is the arc (intrinsic).

\paragraph{Consequences of non-global manifold structure.}
Recent work shows that token embeddings may not form a single global smooth manifold but rather a stratified or piecewise structure~\citep{Robinson2025,Probing2025}. This has practical consequences: global statistics (e.g., PCA across the entire vocabulary) may obscure local geometric phenomena. Our analysis therefore adopts a \emph{local} manifold hypothesis, requiring smoothness only in semantic neighborhoods. This motivates the use of local estimators (intrinsic dimension, local curvature).

\paragraph{Sampling on a manifold vs.\ ambient space.}
A probability distribution on a manifold $\mathcal{M}\subset\mathbb{R}^d$ is fundamentally different from one on the full ambient space $\mathbb{R}^d$. If data truly lies on a lower-dimensional $\mathcal{M}$, then it has measure zero in $\mathbb{R}^d$, no ``volume'' in the ambient sense. Consequently, densities must be defined with respect to the intrinsic Riemannian measure on $\mathcal{M}$, not the Lebesgue measure on $\mathbb{R}^d$. When we sample according to an ambient radial density $g(t)$, we implicitly weight the intrinsic distribution by factors depending on how the manifold embeds into $\mathbb{R}^d$ - this is the origin of the curvature bias.

\paragraph{Curvature bias: intuition.}
Consider sampling points at a fixed ambient distance $t$ from a reference point $\mu$. On a flat manifold, all directions yield the same intrinsic distance. On a curved manifold, high-curvature directions ``curve back'' toward $\mu$, so reaching ambient distance $t$ requires traveling a longer intrinsic arc. At fixed $t$, high-curvature directions correspond to larger intrinsic displacements and are therefore over-represented in any sample conditioned on ambient radius. However, when sampling concentrates at small $t$ (as occurs for frequent tokens), the curvature correction becomes negligible and the bias is attenuated - this is the mechanism underlying our main theoretical result.

\paragraph{Illustration.}
Figure~\ref{fig:curvature-bias-illustration} visualizes these effects. Panel~(a) shows that at uniform ambient radius, higher-curvature directions (lighter curves) reach larger intrinsic distances, leading to over-representation. Panel~(b) shows that when the radial density concentrates at small radii, all curves nearly coincide and the curvature-induced bias is suppressed.

\begin{figure}[t]
  \centering
  \includegraphics[width=0.85\linewidth]{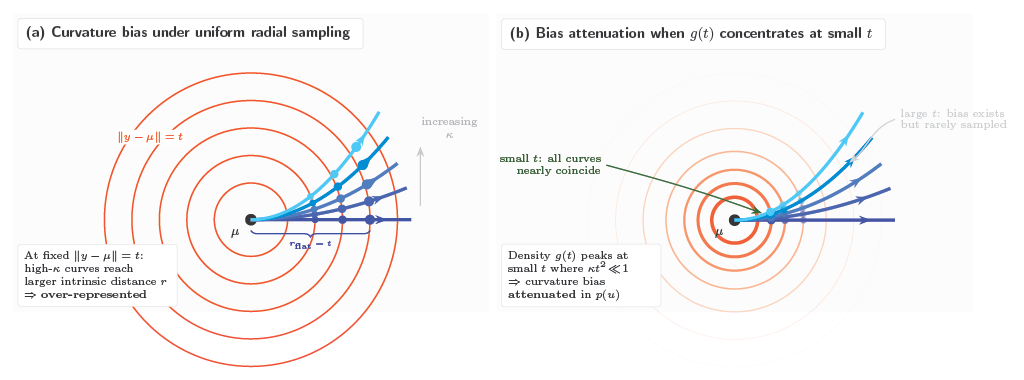}
  \caption{\textbf{Curvature bias and its attenuation.} (a)~Under uniform radial sampling, directions with higher curvature $\kappa$ reach larger intrinsic distances $r$ at the same ambient radius $t$, causing over-representation of curved regions. (b)~When the sampling density $g(t)$ concentrates at small $t$, all directions yield similar intrinsic displacements and the curvature bias is attenuated.}
  \label{fig:curvature-bias-illustration}
\end{figure}

\section{Further Mechanistic-Interpretability Results During Training}
\label{app:mi-training-dynamics}

The main paper uses true gradients as the primary empirical validation of tangent-aligned anisotropy. In addition, we carried out a broader mechanistic-interpretability study of activation-space geometry during training in order to observe more precisely how anisotropy is accompanied, inside the model, by a simplification of the active representation space. We report these results here as supplementary evidence: unlike the main text, they are not based on a direct tangent-versus-matched-normal gradient comparison, but they provide a complementary view of how the representation geometry itself evolves toward a lower-dimensional and more stable organization.

The theoretical analysis of Sections~\ref{sec:ambient-bias}-\ref{sec:tangent-bias} predicts that (i)~radial concentration attenuates curvature visibility, and (ii)~gradient updates preferentially reinforce tangent directions, producing a low-dimensional, approximately linear subspace that dominates the representation geometry. In this appendix, we study these predictions from the perspective of concept geometry across training.

\noindent\textbf{Concept directions as tangent-space proxies.}
Our geometric framework posits that the tangent space \(T_{\mu}\mathcal{M}\) captures the principal directions of variation under frequency-biased sampling. In practice, we do not observe the tangent space directly; instead, we employ concept-based analysis, a branch of mechanistic interpretability focused on decomposing neural representations and using them for human-interpretable analysis~\citep{fel2023holistic,fel2023craft}, through extraction of a low-rank \emph{concept directions} dictionary from intermediate activations. Critically, these concept directions are computed from pooled token activations, so high-frequency tokens contribute disproportionately to the learned dictionary. Under the tangent gradient dominance of Proposition~\ref{prop:tangent-dominance}, precisely these tokens drive weight updates along \(T_{\mu}\mathcal{M}\), with \(\mu\) an idealized form of the frequent-token embedding. Consequently, the extracted concept space inherits the structure of the dominant tangent directions: it represents the subspace that training has preferentially amplified. This motivates our use of concept geometry as a supplementary empirical surrogate for the theoretical tangent space. A complementary toy experiment verifying gradient alignment with the concept space is reported in Appendix~\ref{app:gradient-alignment}.

\noindent\textbf{Models and protocol.}
We study three Transformer architectures of comparable scale: EuroBERT-210M (encoder), Pythia-410M (decoder), and SmolLM2-360M (decoder). This selection enables comparison across encoder and decoder paradigms while keeping dense checkpoint coverage throughout training. Each model provides multiple intermediate checkpoints, grouped into step ranges in Table~\ref{tab:main_heatmap} to track representation dynamics. For each checkpoint and tracked Transformer block, we extract MLP activations from a fixed subset of \texttt{Salesforce/wikitext} and build a pooled activation matrix \(\bm{X}\in\mathbb{R}^{M\times D}\). The aim is not to reproduce the main-paper gradient test, but rather to inspect how the internal activation geometry co-develops with anisotropy as training proceeds.

\noindent\textbf{Concept extraction.}
We apply Semi-NMF~\citep{ding2010convex,parekh2024concept} to obtain concept directions: \(\bm{X}\approx \bm{Z}\bm{H}\) with \(\bm{Z}\ge 0\), where \(\bm{H}\in\mathbb{R}^{K\times D}\) contains concept vectors and \(\bm{Z}\in\mathbb{R}^{M\times K}\) the corresponding coefficients. The nonnegativity constraint on \(\bm{Z}\) naturally models the conic concentration typical of anisotropic representations while preserving signed concepts in \(\bm{H}\). All geometric statistics below are computed in this concept space. In that sense, the appendix probes whether the effective representational support becomes simpler and more aligned with a small set of dominant activation-derived directions as training progresses.

\noindent\textbf{Metrics.}
We report the following quantities (cf.\ Table~\ref{tab:main_heatmap}):
\emph{Linear anisotropy:} \(\mathrm{PCA}_{70\%}\), the number of components required to explain \(70\%\) of the variance; \(\mathrm{Sim}_{\mathrm{PCA}}\), cosine similarity of leading eigenvectors across successive checkpoints.
\emph{Intrinsic dimension:} \(\mathrm{ID}_{2\mathrm{nn}}\), the TwoNN estimate~\citep{facco2017estimating}; \(\mathrm{ID}_{\mathrm{Graph}}\), a graph-based estimator sensitive to curvature and manifold structure~\citep{Costa2006}.
\emph{Effective rank:} \(r_k^{\mathrm{eff}}\), the spectral-entropy rank of the concept space~\citep{7098875}.
\emph{Intrinsic entropy:} \(S_{\mathcal{M}}\), graph-based entropy on \(\mathcal{M}\)~\citep{Costa2006,sadrtdinov2025sgdfree}, used to track distributional evolution on the learned support.
\emph{Concept-weight alignment:} \(c_W\), maximum correlation between the activation-derived concept basis at layer \(\ell\) and a concept basis extracted from \(\bm{W}^{(\ell+1)}\), quantifying the extent to which weights organize around learned concept directions.

\noindent\textbf{Predictions and interpretation.}
Our analysis predicts that training progressively favors tangent directions, inducing an anisotropic flattening: manifold-support estimates should remain low, or contract, while linear spectral proxies can remain high as variance spreads over many weak directions. We therefore expect a stable scale separation, typically \(r_k^{\mathrm{eff}} \gg \mathrm{ID}_{2\mathrm{nn}} \gtrsim \mathrm{ID}_{\mathrm{Graph}}\), where \(\mathrm{ID}_{\mathrm{Graph}}\) reflects a global manifold-like support and \(\mathrm{ID}_{2\mathrm{nn}}\) reflects locally active degrees of freedom at nearest-neighbor scale. In the present appendix, this separation is the key descriptive signature: if anisotropy is indeed induced by training concentrating updates along a restricted tangent-aligned set of directions, then the model should behave as though its actively used geometry becomes simpler than the ambient linear dimension would suggest. Consistent tangent-space preference should further yield increasing \(\mathrm{Sim}_{\mathrm{PCA}}\), indicating stabilization of principal directions, and non-trivial \(c_W\), indicating that learned weights organize around those dominant directions. Finally, \(S_{\mathcal{M}}\) tracks the convergence of the concept distribution on \(\mathcal{M}\): when \(S_{\mathcal{M}}\) evolves, geometry should co-evolve; when it stabilizes, geometry-related metrics should plateau.

\begin{table*}[t]
\centering
\scriptsize
\setlength{\tabcolsep}{1.6pt}
\renewcommand{\arraystretch}{1.05}
\caption{\textbf{Layerwise geometry across training checkpoints (Semi-NMF concept decomposition).}
Each cell reports the \emph{first}, \emph{intermediate}, and \emph{last} transformer layers (top$\to$bottom; indices omitted because model depths differ).
Salmon shading is min-max normalized \emph{per model and metric} (stronger = larger value). Last-layer entries for $c_W$ in the provided logs are shown as ``-'' since their is no weight to correlate with. In \(PCA_{70\%}\), a value above the decomposition threshold is noted 101 (which implies a need for more than 100 features to explain 70\% variance, thus a better estimation in those cases is the effective rank \(r_k^{\mathrm{eff}}\)).}
\resizebox{\textwidth}{!}{%
\begin{tabular}{l*{8}{c}@{\modelsep}*{8}{c}@{\modelsep}*{4}{c}}
\toprule
& \multicolumn{8}{c}{\textbf{EuroBERT-210M}} & \multicolumn{8}{c}{\textbf{Pythia-410M}} & \multicolumn{4}{c}{\textbf{SmolLM2-360M}} \\
\cmidrule(lr){2-9}\cmidrule(lr){10-17}\cmidrule(lr){18-21}
Metric & 10k-60k & 70k-120k & 130k-180k & 190k-240k & 250k-300k & 310k-360k & 370k-420k & 430k-480k & 10k-22k & 26k-38k & 42k-54k & 58k-70k & 78k-90k & 94k-106k & 110k-122k & 126k-138k & 160k-640k & 800k-1.28M & 1.44M-1.92M & 2.08M-2.56M \\
\midrule
$\mathrm{ID}_{\mathrm{Graph}}$ & \shortstack{\colorbox{ICMLSalmon!86}{\makebox[3.9em][c]{\strut 22}}\\ \colorbox{ICMLSalmon!83}{\makebox[3.9em][c]{\strut 21}}\\ \colorbox{ICMLSalmon!86}{\makebox[3.9em][c]{\strut 22}}} & \shortstack{\colorbox{ICMLSalmon!86}{\makebox[3.9em][c]{\strut 22}}\\ \colorbox{ICMLSalmon!39}{\makebox[3.9em][c]{\strut 10}}\\ \colorbox{ICMLSalmon!92}{\makebox[3.9em][c]{\strut 24}}} & \shortstack{\colorbox{ICMLSalmon!86}{\makebox[3.9em][c]{\strut 22}}\\ \colorbox{ICMLSalmon!5}{\makebox[3.9em][c]{\strut 7}}\\ \colorbox{ICMLSalmon!92}{\makebox[3.9em][c]{\strut 24}}} & \shortstack{\colorbox{ICMLSalmon!92}{\makebox[3.9em][c]{\strut 24}}\\ \colorbox{ICMLSalmon!32}{\makebox[3.9em][c]{\strut 9}}\\ \colorbox{ICMLSalmon!89}{\makebox[3.9em][c]{\strut 23}}} & \shortstack{\colorbox{ICMLSalmon!89}{\makebox[3.9em][c]{\strut 23}}\\ \colorbox{ICMLSalmon!39}{\makebox[3.9em][c]{\strut 10}}\\ \colorbox{ICMLSalmon!80}{\makebox[3.9em][c]{\strut 20}}} & \shortstack{\colorbox{ICMLSalmon!95}{\makebox[3.9em][c]{\strut 25}}\\ \colorbox{ICMLSalmon!39}{\makebox[3.9em][c]{\strut 10}}\\ \colorbox{ICMLSalmon!80}{\makebox[3.9em][c]{\strut 20}}} & \shortstack{\colorbox{ICMLSalmon!92}{\makebox[3.9em][c]{\strut 24}}\\ \colorbox{ICMLSalmon!32}{\makebox[3.9em][c]{\strut 9}}\\ \colorbox{ICMLSalmon!83}{\makebox[3.9em][c]{\strut 21}}} & \shortstack{\colorbox{ICMLSalmon!92}{\makebox[3.9em][c]{\strut 24}}\\ \colorbox{ICMLSalmon!49}{\makebox[3.9em][c]{\strut 12}}\\ \colorbox{ICMLSalmon!86}{\makebox[3.9em][c]{\strut 22}}} & \shortstack{\colorbox{ICMLSalmon!63}{\makebox[3.9em][c]{\strut 19}}\\ \colorbox{ICMLSalmon!95}{\makebox[3.9em][c]{\strut 37}}\\ \colorbox{ICMLSalmon!24}{\makebox[3.9em][c]{\strut 6}}} & \shortstack{\colorbox{ICMLSalmon!73}{\makebox[3.9em][c]{\strut 24}}\\ \colorbox{ICMLSalmon!79}{\makebox[3.9em][c]{\strut 27}}\\ \colorbox{ICMLSalmon!18}{\makebox[3.9em][c]{\strut 5}}} & \shortstack{\colorbox{ICMLSalmon!69}{\makebox[3.9em][c]{\strut 22}}\\ \colorbox{ICMLSalmon!81}{\makebox[3.9em][c]{\strut 28}}\\ \colorbox{ICMLSalmon!5}{\makebox[3.9em][c]{\strut 4}}} & \shortstack{\colorbox{ICMLSalmon!67}{\makebox[3.9em][c]{\strut 21}}\\ \colorbox{ICMLSalmon!75}{\makebox[3.9em][c]{\strut 25}}\\ \colorbox{ICMLSalmon!5}{\makebox[3.9em][c]{\strut 4}}} & \shortstack{\colorbox{ICMLSalmon!63}{\makebox[3.9em][c]{\strut 19}}\\ \colorbox{ICMLSalmon!69}{\makebox[3.9em][c]{\strut 22}}\\ \colorbox{ICMLSalmon!5}{\makebox[3.9em][c]{\strut 4}}} & \shortstack{\colorbox{ICMLSalmon!67}{\makebox[3.9em][c]{\strut 21}}\\ \colorbox{ICMLSalmon!71}{\makebox[3.9em][c]{\strut 23}}\\ \colorbox{ICMLSalmon!5}{\makebox[3.9em][c]{\strut 4}}} & \shortstack{\colorbox{ICMLSalmon!65}{\makebox[3.9em][c]{\strut 20}}\\ \colorbox{ICMLSalmon!75}{\makebox[3.9em][c]{\strut 25}}\\ \colorbox{ICMLSalmon!18}{\makebox[3.9em][c]{\strut 5}}} & \shortstack{\colorbox{ICMLSalmon!63}{\makebox[3.9em][c]{\strut 19}}\\ \colorbox{ICMLSalmon!73}{\makebox[3.9em][c]{\strut 24}}\\ \colorbox{ICMLSalmon!18}{\makebox[3.9em][c]{\strut 5}}} & \shortstack{\colorbox{ICMLSalmon!72}{\makebox[3.9em][c]{\strut 18}}\\ \colorbox{ICMLSalmon!75}{\makebox[3.9em][c]{\strut 19}}\\ \colorbox{ICMLSalmon!49}{\makebox[3.9em][c]{\strut 11}}} & \shortstack{\colorbox{ICMLSalmon!72}{\makebox[3.9em][c]{\strut 18}}\\ \colorbox{ICMLSalmon!90}{\makebox[3.9em][c]{\strut 25}}\\ \colorbox{ICMLSalmon!29}{\makebox[3.9em][c]{\strut 7}}} & \shortstack{\colorbox{ICMLSalmon!69}{\makebox[3.9em][c]{\strut 17}}\\ \colorbox{ICMLSalmon!95}{\makebox[3.9em][c]{\strut 27}}\\ \colorbox{ICMLSalmon!5}{\makebox[3.9em][c]{\strut 5}}} & \shortstack{\colorbox{ICMLSalmon!63}{\makebox[3.9em][c]{\strut 15}}\\ \colorbox{ICMLSalmon!86}{\makebox[3.9em][c]{\strut 23}}\\ \colorbox{ICMLSalmon!5}{\makebox[3.9em][c]{\strut 5}}} \\
\addlinespace[0.2em]\cmidrule(lr){1-21}\addlinespace[0.2em]
$\mathrm{ID}_{2\mathrm{nn}}$ & \shortstack{\colorbox{ICMLSalmon!77}{\makebox[3.9em][c]{\strut 54.56}}\\ \colorbox{ICMLSalmon!73}{\makebox[3.9em][c]{\strut 52.08}}\\ \colorbox{ICMLSalmon!84}{\makebox[3.9em][c]{\strut 60.58}}} & \shortstack{\colorbox{ICMLSalmon!87}{\makebox[3.9em][c]{\strut 63.18}}\\ \colorbox{ICMLSalmon!42}{\makebox[3.9em][c]{\strut 33.19}}\\ \colorbox{ICMLSalmon!80}{\makebox[3.9em][c]{\strut 57.36}}} & \shortstack{\colorbox{ICMLSalmon!95}{\makebox[3.9em][c]{\strut 70.28}}\\ \colorbox{ICMLSalmon!5}{\makebox[3.9em][c]{\strut 23.93}}\\ \colorbox{ICMLSalmon!74}{\makebox[3.9em][c]{\strut 52.52}}} & \shortstack{\colorbox{ICMLSalmon!90}{\makebox[3.9em][c]{\strut 65.69}}\\ \colorbox{ICMLSalmon!29}{\makebox[3.9em][c]{\strut 28.18}}\\ \colorbox{ICMLSalmon!70}{\makebox[3.9em][c]{\strut 49.62}}} & \shortstack{\colorbox{ICMLSalmon!93}{\makebox[3.9em][c]{\strut 68.02}}\\ \colorbox{ICMLSalmon!40}{\makebox[3.9em][c]{\strut 32.23}}\\ \colorbox{ICMLSalmon!68}{\makebox[3.9em][c]{\strut 47.95}}} & \shortstack{\colorbox{ICMLSalmon!88}{\makebox[3.9em][c]{\strut 63.79}}\\ \colorbox{ICMLSalmon!38}{\makebox[3.9em][c]{\strut 31.47}}\\ \colorbox{ICMLSalmon!66}{\makebox[3.9em][c]{\strut 46.67}}} & \shortstack{\colorbox{ICMLSalmon!86}{\makebox[3.9em][c]{\strut 62.41}}\\ \colorbox{ICMLSalmon!40}{\makebox[3.9em][c]{\strut 32.26}}\\ \colorbox{ICMLSalmon!60}{\makebox[3.9em][c]{\strut 42.93}}} & \shortstack{\colorbox{ICMLSalmon!92}{\makebox[3.9em][c]{\strut 67.51}}\\ \colorbox{ICMLSalmon!44}{\makebox[3.9em][c]{\strut 34.10}}\\ \colorbox{ICMLSalmon!64}{\makebox[3.9em][c]{\strut 45.47}}} & \shortstack{\colorbox{ICMLSalmon!75}{\makebox[3.9em][c]{\strut 52.24}}\\ \colorbox{ICMLSalmon!95}{\makebox[3.9em][c]{\strut 76.46}}\\ \colorbox{ICMLSalmon!26}{\makebox[3.9em][c]{\strut 16.07}}} & \shortstack{\colorbox{ICMLSalmon!72}{\makebox[3.9em][c]{\strut 49.32}}\\ \colorbox{ICMLSalmon!94}{\makebox[3.9em][c]{\strut 74.92}}\\ \colorbox{ICMLSalmon!25}{\makebox[3.9em][c]{\strut 15.48}}} & \shortstack{\colorbox{ICMLSalmon!72}{\makebox[3.9em][c]{\strut 49.04}}\\ \colorbox{ICMLSalmon!89}{\makebox[3.9em][c]{\strut 68.65}}\\ \colorbox{ICMLSalmon!24}{\makebox[3.9em][c]{\strut 15.13}}} & \shortstack{\colorbox{ICMLSalmon!67}{\makebox[3.9em][c]{\strut 44.80}}\\ \colorbox{ICMLSalmon!87}{\makebox[3.9em][c]{\strut 65.89}}\\ \colorbox{ICMLSalmon!12}{\makebox[3.9em][c]{\strut 11.93}}} & \shortstack{\colorbox{ICMLSalmon!76}{\makebox[3.9em][c]{\strut 53.78}}\\ \colorbox{ICMLSalmon!86}{\makebox[3.9em][c]{\strut 64.83}}\\ \colorbox{ICMLSalmon!5}{\makebox[3.9em][c]{\strut 11.36}}} & \shortstack{\colorbox{ICMLSalmon!70}{\makebox[3.9em][c]{\strut 46.90}}\\ \colorbox{ICMLSalmon!84}{\makebox[3.9em][c]{\strut 62.67}}\\ \colorbox{ICMLSalmon!13}{\makebox[3.9em][c]{\strut 12.21}}} & \shortstack{\colorbox{ICMLSalmon!72}{\makebox[3.9em][c]{\strut 49.64}}\\ \colorbox{ICMLSalmon!86}{\makebox[3.9em][c]{\strut 64.69}}\\ \colorbox{ICMLSalmon!11}{\makebox[3.9em][c]{\strut 11.81}}} & \shortstack{\colorbox{ICMLSalmon!68}{\makebox[3.9em][c]{\strut 44.94}}\\ \colorbox{ICMLSalmon!88}{\makebox[3.9em][c]{\strut 68.03}}\\ \colorbox{ICMLSalmon!9}{\makebox[3.9em][c]{\strut 11.56}}} & \shortstack{\colorbox{ICMLSalmon!67}{\makebox[3.9em][c]{\strut 46.82}}\\ \colorbox{ICMLSalmon!92}{\makebox[3.9em][c]{\strut 73.38}}\\ \colorbox{ICMLSalmon!47}{\makebox[3.9em][c]{\strut 30.43}}} & \shortstack{\colorbox{ICMLSalmon!65}{\makebox[3.9em][c]{\strut 45.02}}\\ \colorbox{ICMLSalmon!92}{\makebox[3.9em][c]{\strut 72.84}}\\ \colorbox{ICMLSalmon!31}{\makebox[3.9em][c]{\strut 21.72}}} & \shortstack{\colorbox{ICMLSalmon!66}{\makebox[3.9em][c]{\strut 45.40}}\\ \colorbox{ICMLSalmon!95}{\makebox[3.9em][c]{\strut 76.89}}\\ \colorbox{ICMLSalmon!24}{\makebox[3.9em][c]{\strut 19.09}}} & \shortstack{\colorbox{ICMLSalmon!64}{\makebox[3.9em][c]{\strut 43.69}}\\ \colorbox{ICMLSalmon!84}{\makebox[3.9em][c]{\strut 63.82}}\\ \colorbox{ICMLSalmon!5}{\makebox[3.9em][c]{\strut 15.30}}} \\
\addlinespace[0.2em]\cmidrule(lr){1-21}\addlinespace[0.2em]
$PCA_{70\%}$ & \shortstack{\colorbox{ICMLSalmon!83}{\makebox[3.9em][c]{\strut 76}}\\ \colorbox{ICMLSalmon!47}{\makebox[3.9em][c]{\strut 25}}\\ \colorbox{ICMLSalmon!71}{\makebox[3.9em][c]{\strut 56}}} & \shortstack{\colorbox{ICMLSalmon!91}{\makebox[3.9em][c]{\strut 90}}\\ \colorbox{ICMLSalmon!5}{\makebox[3.9em][c]{\strut 1}}\\ \colorbox{ICMLSalmon!79}{\makebox[3.9em][c]{\strut 69}}} & \shortstack{\colorbox{ICMLSalmon!95}{\makebox[3.9em][c]{\strut 98}}\\ \colorbox{ICMLSalmon!5}{\makebox[3.9em][c]{\strut 1}}\\ \colorbox{ICMLSalmon!51}{\makebox[3.9em][c]{\strut 30}}} & \shortstack{\colorbox{ICMLSalmon!94}{\makebox[3.9em][c]{\strut 96}}\\ \colorbox{ICMLSalmon!5}{\makebox[3.9em][c]{\strut 1}}\\ \colorbox{ICMLSalmon!62}{\makebox[3.9em][c]{\strut 43}}} & \shortstack{\colorbox{ICMLSalmon!93}{\makebox[3.9em][c]{\strut 95}}\\ \colorbox{ICMLSalmon!5}{\makebox[3.9em][c]{\strut 1}}\\ \colorbox{ICMLSalmon!53}{\makebox[3.9em][c]{\strut 32}}} & \shortstack{\colorbox{ICMLSalmon!94}{\makebox[3.9em][c]{\strut 96}}\\ \colorbox{ICMLSalmon!38}{\makebox[3.9em][c]{\strut 17}}\\ \colorbox{ICMLSalmon!47}{\makebox[3.9em][c]{\strut 25}}} & \shortstack{\colorbox{ICMLSalmon!93}{\makebox[3.9em][c]{\strut 94}}\\ \colorbox{ICMLSalmon!5}{\makebox[3.9em][c]{\strut 1}}\\ \colorbox{ICMLSalmon!51}{\makebox[3.9em][c]{\strut 30}}} & \shortstack{\colorbox{ICMLSalmon!92}{\makebox[3.9em][c]{\strut 93}}\\ \colorbox{ICMLSalmon!5}{\makebox[3.9em][c]{\strut 1}}\\ \colorbox{ICMLSalmon!55}{\makebox[3.9em][c]{\strut 34}}} & \shortstack{\colorbox{ICMLSalmon!88}{\makebox[3.9em][c]{\strut 88}}\\ \colorbox{ICMLSalmon!95}{\makebox[3.9em][c]{\strut 101}}\\ \colorbox{ICMLSalmon!5}{\makebox[3.9em][c]{\strut 1}}} & \shortstack{\colorbox{ICMLSalmon!90}{\makebox[3.9em][c]{\strut 91}}\\ \colorbox{ICMLSalmon!95}{\makebox[3.9em][c]{\strut 101}}\\ \colorbox{ICMLSalmon!5}{\makebox[3.9em][c]{\strut 1}}} & \shortstack{\colorbox{ICMLSalmon!91}{\makebox[3.9em][c]{\strut 93}}\\ \colorbox{ICMLSalmon!95}{\makebox[3.9em][c]{\strut 101}}\\ \colorbox{ICMLSalmon!5}{\makebox[3.9em][c]{\strut 1}}} & \shortstack{\colorbox{ICMLSalmon!91}{\makebox[3.9em][c]{\strut 94}}\\ \colorbox{ICMLSalmon!95}{\makebox[3.9em][c]{\strut 101}}\\ \colorbox{ICMLSalmon!5}{\makebox[3.9em][c]{\strut 1}}} & \shortstack{\colorbox{ICMLSalmon!92}{\makebox[3.9em][c]{\strut 95}}\\ \colorbox{ICMLSalmon!95}{\makebox[3.9em][c]{\strut 101}}\\ \colorbox{ICMLSalmon!5}{\makebox[3.9em][c]{\strut 1}}} & \shortstack{\colorbox{ICMLSalmon!92}{\makebox[3.9em][c]{\strut 95}}\\ \colorbox{ICMLSalmon!95}{\makebox[3.9em][c]{\strut 101}}\\ \colorbox{ICMLSalmon!5}{\makebox[3.9em][c]{\strut 1}}} & \shortstack{\colorbox{ICMLSalmon!92}{\makebox[3.9em][c]{\strut 96}}\\ \colorbox{ICMLSalmon!95}{\makebox[3.9em][c]{\strut 101}}\\ \colorbox{ICMLSalmon!5}{\makebox[3.9em][c]{\strut 1}}} & \shortstack{\colorbox{ICMLSalmon!92}{\makebox[3.9em][c]{\strut 96}}\\ \colorbox{ICMLSalmon!95}{\makebox[3.9em][c]{\strut 101}}\\ \colorbox{ICMLSalmon!5}{\makebox[3.9em][c]{\strut 1}}} & \shortstack{\colorbox{ICMLSalmon!65}{\makebox[3.9em][c]{\strut 49}}\\ \colorbox{ICMLSalmon!95}{\makebox[3.9em][c]{\strut 101}}\\ \colorbox{ICMLSalmon!5}{\makebox[3.9em][c]{\strut 1}}} & \shortstack{\colorbox{ICMLSalmon!65}{\makebox[3.9em][c]{\strut 49}}\\ \colorbox{ICMLSalmon!95}{\makebox[3.9em][c]{\strut 101}}\\ \colorbox{ICMLSalmon!5}{\makebox[3.9em][c]{\strut 1}}} & \shortstack{\colorbox{ICMLSalmon!64}{\makebox[3.9em][c]{\strut 48}}\\ \colorbox{ICMLSalmon!95}{\makebox[3.9em][c]{\strut 101}}\\ \colorbox{ICMLSalmon!5}{\makebox[3.9em][c]{\strut 1}}} & \shortstack{\colorbox{ICMLSalmon!68}{\makebox[3.9em][c]{\strut 53}}\\ \colorbox{ICMLSalmon!95}{\makebox[3.9em][c]{\strut 101}}\\ \colorbox{ICMLSalmon!5}{\makebox[3.9em][c]{\strut 1}}} \\
\addlinespace[0.2em]\cmidrule(lr){1-21}\addlinespace[0.2em]
$r_k^{\mathrm{eff}}$ & \shortstack{\colorbox{ICMLSalmon!84}{\makebox[3.9em][c]{\strut 166.7}}\\ \colorbox{ICMLSalmon!83}{\makebox[3.9em][c]{\strut 164.0}}\\ \colorbox{ICMLSalmon!95}{\makebox[3.9em][c]{\strut 210.7}}} & \shortstack{\colorbox{ICMLSalmon!91}{\makebox[3.9em][c]{\strut 194.5}}\\ \colorbox{ICMLSalmon!27}{\makebox[3.9em][c]{\strut 18.9}}\\ \colorbox{ICMLSalmon!89}{\makebox[3.9em][c]{\strut 187.5}}} & \shortstack{\colorbox{ICMLSalmon!93}{\makebox[3.9em][c]{\strut 202.2}}\\ \colorbox{ICMLSalmon!5}{\makebox[3.9em][c]{\strut 3.0}}\\ \colorbox{ICMLSalmon!84}{\makebox[3.9em][c]{\strut 165.4}}} & \shortstack{\colorbox{ICMLSalmon!92}{\makebox[3.9em][c]{\strut 198.0}}\\ \colorbox{ICMLSalmon!7}{\makebox[3.9em][c]{\strut 3.1}}\\ \colorbox{ICMLSalmon!75}{\makebox[3.9em][c]{\strut 134.4}}} & \shortstack{\colorbox{ICMLSalmon!92}{\makebox[3.9em][c]{\strut 198.6}}\\ \colorbox{ICMLSalmon!11}{\makebox[3.9em][c]{\strut 4.3}}\\ \colorbox{ICMLSalmon!72}{\makebox[3.9em][c]{\strut 124.4}}} & \shortstack{\colorbox{ICMLSalmon!92}{\makebox[3.9em][c]{\strut 196.4}}\\ \colorbox{ICMLSalmon!10}{\makebox[3.9em][c]{\strut 4.1}}\\ \colorbox{ICMLSalmon!70}{\makebox[3.9em][c]{\strut 118.8}}} & \shortstack{\colorbox{ICMLSalmon!91}{\makebox[3.9em][c]{\strut 195.0}}\\ \colorbox{ICMLSalmon!18}{\makebox[3.9em][c]{\strut 8.8}}\\ \colorbox{ICMLSalmon!64}{\makebox[3.9em][c]{\strut 99.8}}} & \shortstack{\colorbox{ICMLSalmon!92}{\makebox[3.9em][c]{\strut 196.9}}\\ \colorbox{ICMLSalmon!16}{\makebox[3.9em][c]{\strut 7.2}}\\ \colorbox{ICMLSalmon!70}{\makebox[3.9em][c]{\strut 119.3}}} & \shortstack{\colorbox{ICMLSalmon!69}{\makebox[3.9em][c]{\strut 141.5}}\\ \colorbox{ICMLSalmon!95}{\makebox[3.9em][c]{\strut 261.4}}\\ \colorbox{ICMLSalmon!12}{\makebox[3.9em][c]{\strut 4.4}}} & \shortstack{\colorbox{ICMLSalmon!68}{\makebox[3.9em][c]{\strut 136.5}}\\ \colorbox{ICMLSalmon!92}{\makebox[3.9em][c]{\strut 243.9}}\\ \colorbox{ICMLSalmon!7}{\makebox[3.9em][c]{\strut 2.2}}} & \shortstack{\colorbox{ICMLSalmon!67}{\makebox[3.9em][c]{\strut 133.6}}\\ \colorbox{ICMLSalmon!90}{\makebox[3.9em][c]{\strut 234.4}}\\ \colorbox{ICMLSalmon!6}{\makebox[3.9em][c]{\strut 1.8}}} & \shortstack{\colorbox{ICMLSalmon!67}{\makebox[3.9em][c]{\strut 134.0}}\\ \colorbox{ICMLSalmon!88}{\makebox[3.9em][c]{\strut 223.6}}\\ \colorbox{ICMLSalmon!5}{\makebox[3.9em][c]{\strut 1.8}}} & \shortstack{\colorbox{ICMLSalmon!66}{\makebox[3.9em][c]{\strut 131.7}}\\ \colorbox{ICMLSalmon!87}{\makebox[3.9em][c]{\strut 221.3}}\\ \colorbox{ICMLSalmon!5}{\makebox[3.9em][c]{\strut 1.8}}} & \shortstack{\colorbox{ICMLSalmon!66}{\makebox[3.9em][c]{\strut 129.7}}\\ \colorbox{ICMLSalmon!85}{\makebox[3.9em][c]{\strut 209.7}}\\ \colorbox{ICMLSalmon!6}{\makebox[3.9em][c]{\strut 1.9}}} & \shortstack{\colorbox{ICMLSalmon!66}{\makebox[3.9em][c]{\strut 129.3}}\\ \colorbox{ICMLSalmon!84}{\makebox[3.9em][c]{\strut 207.1}}\\ \colorbox{ICMLSalmon!8}{\makebox[3.9em][c]{\strut 2.3}}} & \shortstack{\colorbox{ICMLSalmon!66}{\makebox[3.9em][c]{\strut 128.0}}\\ \colorbox{ICMLSalmon!84}{\makebox[3.9em][c]{\strut 204.4}}\\ \colorbox{ICMLSalmon!7}{\makebox[3.9em][c]{\strut 2.2}}} & \shortstack{\colorbox{ICMLSalmon!66}{\makebox[3.9em][c]{\strut 119.2}}\\ \colorbox{ICMLSalmon!93}{\makebox[3.9em][c]{\strut 235.4}}\\ \colorbox{ICMLSalmon!33}{\makebox[3.9em][c]{\strut 30.5}}} & \shortstack{\colorbox{ICMLSalmon!62}{\makebox[3.9em][c]{\strut 105.7}}\\ \colorbox{ICMLSalmon!95}{\makebox[3.9em][c]{\strut 243.3}}\\ \colorbox{ICMLSalmon!9}{\makebox[3.9em][c]{\strut 2.3}}} & \shortstack{\colorbox{ICMLSalmon!61}{\makebox[3.9em][c]{\strut 104.9}}\\ \colorbox{ICMLSalmon!94}{\makebox[3.9em][c]{\strut 238.3}}\\ \colorbox{ICMLSalmon!5}{\makebox[3.9em][c]{\strut 1.6}}} & \shortstack{\colorbox{ICMLSalmon!56}{\makebox[3.9em][c]{\strut 87.6}}\\ \colorbox{ICMLSalmon!89}{\makebox[3.9em][c]{\strut 213.8}}\\ \colorbox{ICMLSalmon!7}{\makebox[3.9em][c]{\strut 1.9}}} \\
\addlinespace[0.2em]\cmidrule(lr){1-21}\addlinespace[0.2em]
$\mathrm{Sim}_{\mathrm{PCA}}$ & \shortstack{\colorbox{ICMLSalmon!61}{\makebox[3.9em][c]{\strut 0.514}}\\ \colorbox{ICMLSalmon!40}{\makebox[3.9em][c]{\strut 0.348}}\\ \colorbox{ICMLSalmon!5}{\makebox[3.9em][c]{\strut 0.230}}} & \shortstack{\colorbox{ICMLSalmon!91}{\makebox[3.9em][c]{\strut 0.851}}\\ \colorbox{ICMLSalmon!70}{\makebox[3.9em][c]{\strut 0.596}}\\ \colorbox{ICMLSalmon!46}{\makebox[3.9em][c]{\strut 0.387}}} & \shortstack{\colorbox{ICMLSalmon!88}{\makebox[3.9em][c]{\strut 0.810}}\\ \colorbox{ICMLSalmon!76}{\makebox[3.9em][c]{\strut 0.663}}\\ \colorbox{ICMLSalmon!61}{\makebox[3.9em][c]{\strut 0.516}}} & \shortstack{\colorbox{ICMLSalmon!92}{\makebox[3.9em][c]{\strut 0.852}}\\ \colorbox{ICMLSalmon!68}{\makebox[3.9em][c]{\strut 0.578}}\\ \colorbox{ICMLSalmon!34}{\makebox[3.9em][c]{\strut 0.314}}} & \shortstack{\colorbox{ICMLSalmon!95}{\makebox[3.9em][c]{\strut 0.897}}\\ \colorbox{ICMLSalmon!67}{\makebox[3.9em][c]{\strut 0.568}}\\ \colorbox{ICMLSalmon!40}{\makebox[3.9em][c]{\strut 0.350}}} & \shortstack{\colorbox{ICMLSalmon!91}{\makebox[3.9em][c]{\strut 0.847}}\\ \colorbox{ICMLSalmon!50}{\makebox[3.9em][c]{\strut 0.419}}\\ \colorbox{ICMLSalmon!12}{\makebox[3.9em][c]{\strut 0.237}}} & \shortstack{\colorbox{ICMLSalmon!67}{\makebox[3.9em][c]{\strut 0.571}}\\ \colorbox{ICMLSalmon!16}{\makebox[3.9em][c]{\strut 0.244}}\\ \colorbox{ICMLSalmon!38}{\makebox[3.9em][c]{\strut 0.335}}} & \shortstack{\colorbox{ICMLSalmon!90}{\makebox[3.9em][c]{\strut 0.825}}\\ \colorbox{ICMLSalmon!67}{\makebox[3.9em][c]{\strut 0.567}}\\ \colorbox{ICMLSalmon!31}{\makebox[3.9em][c]{\strut 0.302}}} & \shortstack{\colorbox{ICMLSalmon!74}{\makebox[3.9em][c]{\strut 0.956}}\\ \colorbox{ICMLSalmon!5}{\makebox[3.9em][c]{\strut 0.887}}\\ \colorbox{ICMLSalmon!90}{\makebox[3.9em][c]{\strut 0.988}}} & \shortstack{\colorbox{ICMLSalmon!89}{\makebox[3.9em][c]{\strut 0.985}}\\ \colorbox{ICMLSalmon!36}{\makebox[3.9em][c]{\strut 0.903}}\\ \colorbox{ICMLSalmon!53}{\makebox[3.9em][c]{\strut 0.923}}} & \shortstack{\colorbox{ICMLSalmon!90}{\makebox[3.9em][c]{\strut 0.988}}\\ \colorbox{ICMLSalmon!72}{\makebox[3.9em][c]{\strut 0.952}}\\ \colorbox{ICMLSalmon!72}{\makebox[3.9em][c]{\strut 0.952}}} & \shortstack{\colorbox{ICMLSalmon!92}{\makebox[3.9em][c]{\strut 0.991}}\\ \colorbox{ICMLSalmon!71}{\makebox[3.9em][c]{\strut 0.951}}\\ \colorbox{ICMLSalmon!88}{\makebox[3.9em][c]{\strut 0.983}}} & \shortstack{\colorbox{ICMLSalmon!93}{\makebox[3.9em][c]{\strut 0.995}}\\ \colorbox{ICMLSalmon!60}{\makebox[3.9em][c]{\strut 0.932}}\\ \colorbox{ICMLSalmon!83}{\makebox[3.9em][c]{\strut 0.973}}} & \shortstack{\colorbox{ICMLSalmon!94}{\makebox[3.9em][c]{\strut 0.997}}\\ \colorbox{ICMLSalmon!76}{\makebox[3.9em][c]{\strut 0.960}}\\ \colorbox{ICMLSalmon!94}{\makebox[3.9em][c]{\strut 0.996}}} & \shortstack{\colorbox{ICMLSalmon!95}{\makebox[3.9em][c]{\strut 0.998}}\\ \colorbox{ICMLSalmon!88}{\makebox[3.9em][c]{\strut 0.983}}\\ \colorbox{ICMLSalmon!92}{\makebox[3.9em][c]{\strut 0.991}}} & \shortstack{\colorbox{ICMLSalmon!95}{\makebox[3.9em][c]{\strut 0.998}}\\ \colorbox{ICMLSalmon!89}{\makebox[3.9em][c]{\strut 0.984}}\\ \colorbox{ICMLSalmon!95}{\makebox[3.9em][c]{\strut 0.998}}} & \shortstack{\colorbox{ICMLSalmon!91}{\makebox[3.9em][c]{\strut 0.935}}\\ \colorbox{ICMLSalmon!39}{\makebox[3.9em][c]{\strut 0.612}}\\ \colorbox{ICMLSalmon!5}{\makebox[3.9em][c]{\strut 0.540}}} & \shortstack{\colorbox{ICMLSalmon!95}{\makebox[3.9em][c]{\strut 0.971}}\\ \colorbox{ICMLSalmon!64}{\makebox[3.9em][c]{\strut 0.740}}\\ \colorbox{ICMLSalmon!70}{\makebox[3.9em][c]{\strut 0.780}}} & \shortstack{\colorbox{ICMLSalmon!94}{\makebox[3.9em][c]{\strut 0.964}}\\ \colorbox{ICMLSalmon!83}{\makebox[3.9em][c]{\strut 0.874}}\\ \colorbox{ICMLSalmon!55}{\makebox[3.9em][c]{\strut 0.688}}} & \shortstack{\colorbox{ICMLSalmon!89}{\makebox[3.9em][c]{\strut 0.922}}\\ \colorbox{ICMLSalmon!85}{\makebox[3.9em][c]{\strut 0.884}}\\ \colorbox{ICMLSalmon!42}{\makebox[3.9em][c]{\strut 0.627}}} \\
\addlinespace[0.2em]\cmidrule(lr){1-21}\addlinespace[0.2em]
$c_W$ & \shortstack{\colorbox{ICMLSalmon!76}{\makebox[3.9em][c]{\strut 0.171}}\\ \colorbox{ICMLSalmon!83}{\makebox[3.9em][c]{\strut 0.176}}\\ \makebox[3.9em][c]{\strut -}} & \shortstack{\colorbox{ICMLSalmon!95}{\makebox[3.9em][c]{\strut 0.185}}\\ \colorbox{ICMLSalmon!75}{\makebox[3.9em][c]{\strut 0.171}}\\ \makebox[3.9em][c]{\strut -}} & \shortstack{\colorbox{ICMLSalmon!76}{\makebox[3.9em][c]{\strut 0.171}}\\ \colorbox{ICMLSalmon!54}{\makebox[3.9em][c]{\strut 0.159}}\\ \makebox[3.9em][c]{\strut -}} & \shortstack{\colorbox{ICMLSalmon!85}{\makebox[3.9em][c]{\strut 0.177}}\\ \colorbox{ICMLSalmon!62}{\makebox[3.9em][c]{\strut 0.163}}\\ \makebox[3.9em][c]{\strut -}} & \shortstack{\colorbox{ICMLSalmon!66}{\makebox[3.9em][c]{\strut 0.165}}\\ \colorbox{ICMLSalmon!5}{\makebox[3.9em][c]{\strut 0.146}}\\ \makebox[3.9em][c]{\strut -}} & \shortstack{\colorbox{ICMLSalmon!77}{\makebox[3.9em][c]{\strut 0.172}}\\ \colorbox{ICMLSalmon!59}{\makebox[3.9em][c]{\strut 0.161}}\\ \makebox[3.9em][c]{\strut -}} & \shortstack{\colorbox{ICMLSalmon!86}{\makebox[3.9em][c]{\strut 0.178}}\\ \colorbox{ICMLSalmon!80}{\makebox[3.9em][c]{\strut 0.173}}\\ \makebox[3.9em][c]{\strut -}} & \shortstack{\colorbox{ICMLSalmon!67}{\makebox[3.9em][c]{\strut 0.166}}\\ \colorbox{ICMLSalmon!59}{\makebox[3.9em][c]{\strut 0.161}}\\ \makebox[3.9em][c]{\strut -}} & \shortstack{\colorbox{ICMLSalmon!57}{\makebox[3.9em][c]{\strut 0.157}}\\ \colorbox{ICMLSalmon!49}{\makebox[3.9em][c]{\strut 0.154}}\\ \makebox[3.9em][c]{\strut -}} & \shortstack{\colorbox{ICMLSalmon!74}{\makebox[3.9em][c]{\strut 0.164}}\\ \colorbox{ICMLSalmon!95}{\makebox[3.9em][c]{\strut 0.176}}\\ \makebox[3.9em][c]{\strut -}} & \shortstack{\colorbox{ICMLSalmon!50}{\makebox[3.9em][c]{\strut 0.154}}\\ \colorbox{ICMLSalmon!60}{\makebox[3.9em][c]{\strut 0.158}}\\ \makebox[3.9em][c]{\strut -}} & \shortstack{\colorbox{ICMLSalmon!46}{\makebox[3.9em][c]{\strut 0.153}}\\ \colorbox{ICMLSalmon!39}{\makebox[3.9em][c]{\strut 0.150}}\\ \makebox[3.9em][c]{\strut -}} & \shortstack{\colorbox{ICMLSalmon!25}{\makebox[3.9em][c]{\strut 0.147}}\\ \colorbox{ICMLSalmon!63}{\makebox[3.9em][c]{\strut 0.159}}\\ \makebox[3.9em][c]{\strut -}} & \shortstack{\colorbox{ICMLSalmon!5}{\makebox[3.9em][c]{\strut 0.145}}\\ \colorbox{ICMLSalmon!93}{\makebox[3.9em][c]{\strut 0.175}}\\ \makebox[3.9em][c]{\strut -}} & \shortstack{\colorbox{ICMLSalmon!58}{\makebox[3.9em][c]{\strut 0.157}}\\ \colorbox{ICMLSalmon!64}{\makebox[3.9em][c]{\strut 0.160}}\\ \makebox[3.9em][c]{\strut -}} & \shortstack{\colorbox{ICMLSalmon!46}{\makebox[3.9em][c]{\strut 0.153}}\\ \colorbox{ICMLSalmon!53}{\makebox[3.9em][c]{\strut 0.155}}\\ \makebox[3.9em][c]{\strut -}} & \shortstack{\colorbox{ICMLSalmon!40}{\makebox[3.9em][c]{\strut 0.131}}\\ \colorbox{ICMLSalmon!95}{\makebox[3.9em][c]{\strut 0.184}}\\ \makebox[3.9em][c]{\strut -}} & \shortstack{\colorbox{ICMLSalmon!29}{\makebox[3.9em][c]{\strut 0.125}}\\ \colorbox{ICMLSalmon!93}{\makebox[3.9em][c]{\strut 0.181}}\\ \makebox[3.9em][c]{\strut -}} & \shortstack{\colorbox{ICMLSalmon!5}{\makebox[3.9em][c]{\strut 0.119}}\\ \colorbox{ICMLSalmon!66}{\makebox[3.9em][c]{\strut 0.151}}\\ \makebox[3.9em][c]{\strut -}} & \shortstack{\colorbox{ICMLSalmon!46}{\makebox[3.9em][c]{\strut 0.135}}\\ \colorbox{ICMLSalmon!61}{\makebox[3.9em][c]{\strut 0.146}}\\ \makebox[3.9em][c]{\strut -}} \\
\addlinespace[0.2em]\cmidrule(lr){1-21}\addlinespace[0.2em]
$S_{\mathcal{M}}$ & \shortstack{\colorbox{ICMLSalmon!84}{\makebox[3.9em][c]{\strut 55.5}}\\ \colorbox{ICMLSalmon!87}{\makebox[3.9em][c]{\strut 58.4}}\\ \colorbox{ICMLSalmon!95}{\makebox[3.9em][c]{\strut 65.6}}} & \shortstack{\colorbox{ICMLSalmon!66}{\makebox[3.9em][c]{\strut 40.6}}\\ \colorbox{ICMLSalmon!43}{\makebox[3.9em][c]{\strut 27.0}}\\ \colorbox{ICMLSalmon!80}{\makebox[3.9em][c]{\strut 52.0}}} & \shortstack{\colorbox{ICMLSalmon!60}{\makebox[3.9em][c]{\strut 37.0}}\\ \colorbox{ICMLSalmon!5}{\makebox[3.9em][c]{\strut 16.8}}\\ \colorbox{ICMLSalmon!73}{\makebox[3.9em][c]{\strut 45.8}}} & \shortstack{\colorbox{ICMLSalmon!62}{\makebox[3.9em][c]{\strut 38.0}}\\ \colorbox{ICMLSalmon!21}{\makebox[3.9em][c]{\strut 18.7}}\\ \colorbox{ICMLSalmon!65}{\makebox[3.9em][c]{\strut 40.2}}} & \shortstack{\colorbox{ICMLSalmon!60}{\makebox[3.9em][c]{\strut 36.6}}\\ \colorbox{ICMLSalmon!25}{\makebox[3.9em][c]{\strut 19.9}}\\ \colorbox{ICMLSalmon!57}{\makebox[3.9em][c]{\strut 35.0}}} & \shortstack{\colorbox{ICMLSalmon!62}{\makebox[3.9em][c]{\strut 38.2}}\\ \colorbox{ICMLSalmon!25}{\makebox[3.9em][c]{\strut 19.8}}\\ \colorbox{ICMLSalmon!56}{\makebox[3.9em][c]{\strut 34.4}}} & \shortstack{\colorbox{ICMLSalmon!61}{\makebox[3.9em][c]{\strut 37.5}}\\ \colorbox{ICMLSalmon!23}{\makebox[3.9em][c]{\strut 19.4}}\\ \colorbox{ICMLSalmon!56}{\makebox[3.9em][c]{\strut 34.3}}} & \shortstack{\colorbox{ICMLSalmon!61}{\makebox[3.9em][c]{\strut 37.2}}\\ \colorbox{ICMLSalmon!32}{\makebox[3.9em][c]{\strut 22.3}}\\ \colorbox{ICMLSalmon!58}{\makebox[3.9em][c]{\strut 35.6}}} & \shortstack{\colorbox{ICMLSalmon!69}{\makebox[3.9em][c]{\strut 74.4}}\\ \colorbox{ICMLSalmon!95}{\makebox[3.9em][c]{\strut 120.7}}\\ \colorbox{ICMLSalmon!30}{\makebox[3.9em][c]{\strut 29.8}}} & \shortstack{\colorbox{ICMLSalmon!79}{\makebox[3.9em][c]{\strut 90.8}}\\ \colorbox{ICMLSalmon!82}{\makebox[3.9em][c]{\strut 95.8}}\\ \colorbox{ICMLSalmon!23}{\makebox[3.9em][c]{\strut 25.6}}} & \shortstack{\colorbox{ICMLSalmon!75}{\makebox[3.9em][c]{\strut 84.7}}\\ \colorbox{ICMLSalmon!83}{\makebox[3.9em][c]{\strut 97.8}}\\ \colorbox{ICMLSalmon!10}{\makebox[3.9em][c]{\strut 20.8}}} & \shortstack{\colorbox{ICMLSalmon!73}{\makebox[3.9em][c]{\strut 80.3}}\\ \colorbox{ICMLSalmon!78}{\makebox[3.9em][c]{\strut 88.8}}\\ \colorbox{ICMLSalmon!11}{\makebox[3.9em][c]{\strut 21.0}}} & \shortstack{\colorbox{ICMLSalmon!68}{\makebox[3.9em][c]{\strut 73.1}}\\ \colorbox{ICMLSalmon!72}{\makebox[3.9em][c]{\strut 78.5}}\\ \colorbox{ICMLSalmon!5}{\makebox[3.9em][c]{\strut 20.3}}} & \shortstack{\colorbox{ICMLSalmon!71}{\makebox[3.9em][c]{\strut 77.5}}\\ \colorbox{ICMLSalmon!72}{\makebox[3.9em][c]{\strut 79.4}}\\ \colorbox{ICMLSalmon!6}{\makebox[3.9em][c]{\strut 20.4}}} & \shortstack{\colorbox{ICMLSalmon!68}{\makebox[3.9em][c]{\strut 73.3}}\\ \colorbox{ICMLSalmon!74}{\makebox[3.9em][c]{\strut 82.5}}\\ \colorbox{ICMLSalmon!21}{\makebox[3.9em][c]{\strut 24.7}}} & \shortstack{\colorbox{ICMLSalmon!66}{\makebox[3.9em][c]{\strut 69.9}}\\ \colorbox{ICMLSalmon!72}{\makebox[3.9em][c]{\strut 78.9}}\\ \colorbox{ICMLSalmon!20}{\makebox[3.9em][c]{\strut 24.4}}} & \shortstack{\colorbox{ICMLSalmon!74}{\makebox[3.9em][c]{\strut 91.4}}\\ \colorbox{ICMLSalmon!77}{\makebox[3.9em][c]{\strut 96.3}}\\ \colorbox{ICMLSalmon!59}{\makebox[3.9em][c]{\strut 70.2}}} & \shortstack{\colorbox{ICMLSalmon!75}{\makebox[3.9em][c]{\strut 92.2}}\\ \colorbox{ICMLSalmon!91}{\makebox[3.9em][c]{\strut 118.9}}\\ \colorbox{ICMLSalmon!35}{\makebox[3.9em][c]{\strut 46.7}}} & \shortstack{\colorbox{ICMLSalmon!73}{\makebox[3.9em][c]{\strut 88.9}}\\ \colorbox{ICMLSalmon!95}{\makebox[3.9em][c]{\strut 126.1}}\\ \colorbox{ICMLSalmon!5}{\makebox[3.9em][c]{\strut 34.5}}} & \shortstack{\colorbox{ICMLSalmon!65}{\makebox[3.9em][c]{\strut 78.4}}\\ \colorbox{ICMLSalmon!84}{\makebox[3.9em][c]{\strut 106.8}}\\ \colorbox{ICMLSalmon!13}{\makebox[3.9em][c]{\strut 35.5}}} \\
\bottomrule
\end{tabular}}
\label{tab:main_heatmap}
\end{table*}

\paragraph{Observations.}
Across models, the entropy heatmap closely tracks the evolution of dimension estimators, suggesting entropy as a reliable proxy for monitoring the mutual-information structure of representations during training. Throughout training, \(\mathrm{ID}_{\mathrm{Graph}}\) is consistently smaller than \(\mathrm{ID}_{2\mathrm{nn}}\), and both are markedly below \(r_k^{\mathrm{eff}}\), indicating representations supported on a comparatively low-dimensional structure while variance remains distributed across many linear directions. This is compatible with tangent-space amplification together with reduced local ``thickness'', that is, fewer locally active degrees of freedom beyond the manifold-like support. Layerwise trends further align with architecture: the encoder shows pronounced intermediate-layer compression, whereas decoders exhibit an expand-compress pattern with strong final-layer contraction for prediction. In parallel, \(\mathrm{Sim}_{\mathrm{PCA}}\) is near-saturated in decoders, and comparatively higher in early encoder layers, consistent with stabilized principal directions once the geometry has converged, while \(c_W\) indicates persistent, non-negligible alignment between MLP weights and dominant concept directions. Overall, the coupled evolution of \(\mathrm{ID}_{\mathrm{Graph}}\), \(\mathrm{ID}_{2\mathrm{nn}}\), \(r_k^{\mathrm{eff}}\), and \(S_{\mathcal{M}}\) is consistent with the anisotropic-flattening mechanism and provides a more internal mechanistic picture of how anisotropy is accompanied by a simplification of the active representational space.

\section{Embedding-Update Enrichment by Token Frequency}
\label{app:embedding-update-enrichment}

The true-gradient results in the main paper provide the cleanest test of the tangent-alignment mechanism because they evaluate the loss-derived backpropagated signal directly. As a complementary analysis, we also examine the realized parameter evolution of the embedding matrix itself by measuring checkpoint-to-checkpoint updates \(dW = W_{t+n} - W_t\), where \(n\) is fixed by the available checkpoint granularity. This appendix asks whether the same tangent preference remains visible in the effective update, not only in the reconstructed gradient.

\paragraph{Update-level tangent and normal enrichments.}
For a tracked token \(i\), let \(e_i(1),\ldots,e_i(T)\in\mathbb{R}^D\) denote its embedding row across checkpoints in a given phase, let \(\mu_i\) be the trajectory centroid, and let the centered trajectory matrix be formed from \(e_i(t)-\mu_i\). We extract a low-rank tangent basis \(Q_T^{(i)}\) by PCA/SVD on this trajectory cloud and use its orthogonal complement as the normal space. For each update row \(\Delta e_i = e_i(t+n)-e_i(t)\), we measure the dimension-normalized energy captured by the tangent and normal components and then normalize each quantity by its random-subspace baseline, so that value \(1\) corresponds to chance level. The reported statistics are therefore tangent and normal enrichments relative to random, computed on the effective update \(B=dW\).

\paragraph{Reporting protocol.}
Figure~\ref{fig:tangent-embedding-enrichments} shows these enrichments as a function of token frequency across models. Tokens are grouped into rarity bins, plotted against \(\log_{10}(\mathrm{freq})\), and color-coded by frequency group in order to make the frequency dependence explicit. Table~\ref{tab:pc_enrichment_summary} complements this aggregate view with exact values for representative tracked tokens, together with their empirical frequencies, in the early and stable regimes.

\begin{figure*}[p]
  \centering
  \includegraphics[width=\textwidth,height=0.84\textheight,keepaspectratio]{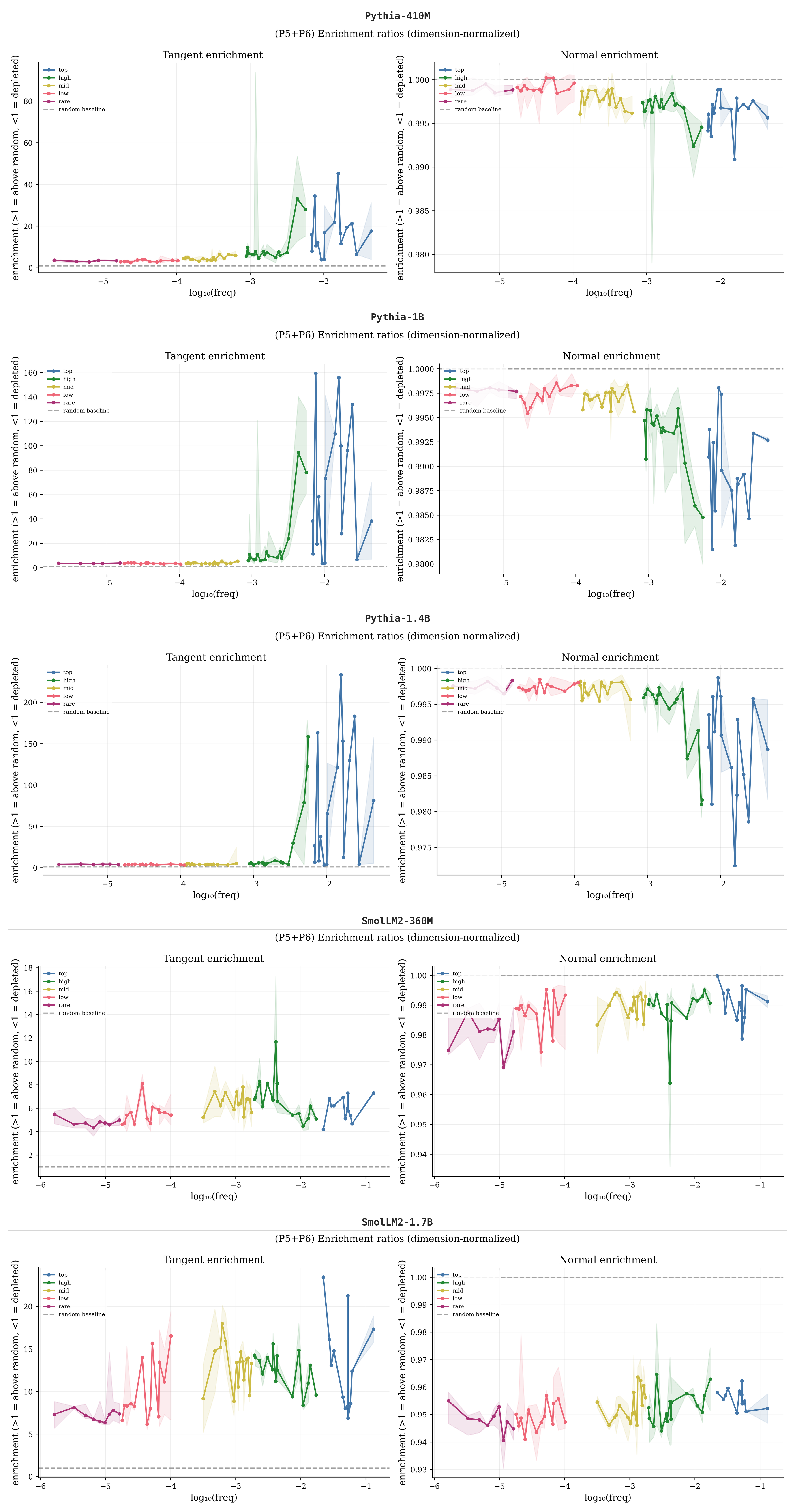}
  \caption{\textbf{Embedding-update tangent and normal enrichments across token frequency.} For each model, we compute tangent and normal enrichment ratios on the embedding update \(dW=W_{t+n}-W_t\), normalized so that the random baseline is \(1\). The left panel in each model reports tangent enrichment; the right panel reports normal enrichment. Curves are plotted against \(\log_{10}(\mathrm{freq})\) and colored by token rarity group, which makes the frequency dependence visible while preserving within-group variability. Frequent tokens tend to exhibit the strongest tangent enrichment, whereas normal enrichment remains near, or slightly below, the random baseline. The effect is visibly weaker for the EuroBERT models, consistent with the milder frequency-concentration pattern in Figure~\ref{fig:frequency_variance_dynamics}.}
  \label{fig:tangent-embedding-enrichments}
\end{figure*}

\begin{figure*}[p]
  \ContinuedFloat
  \centering
  \includegraphics[width=\textwidth,height=0.84\textheight,keepaspectratio]{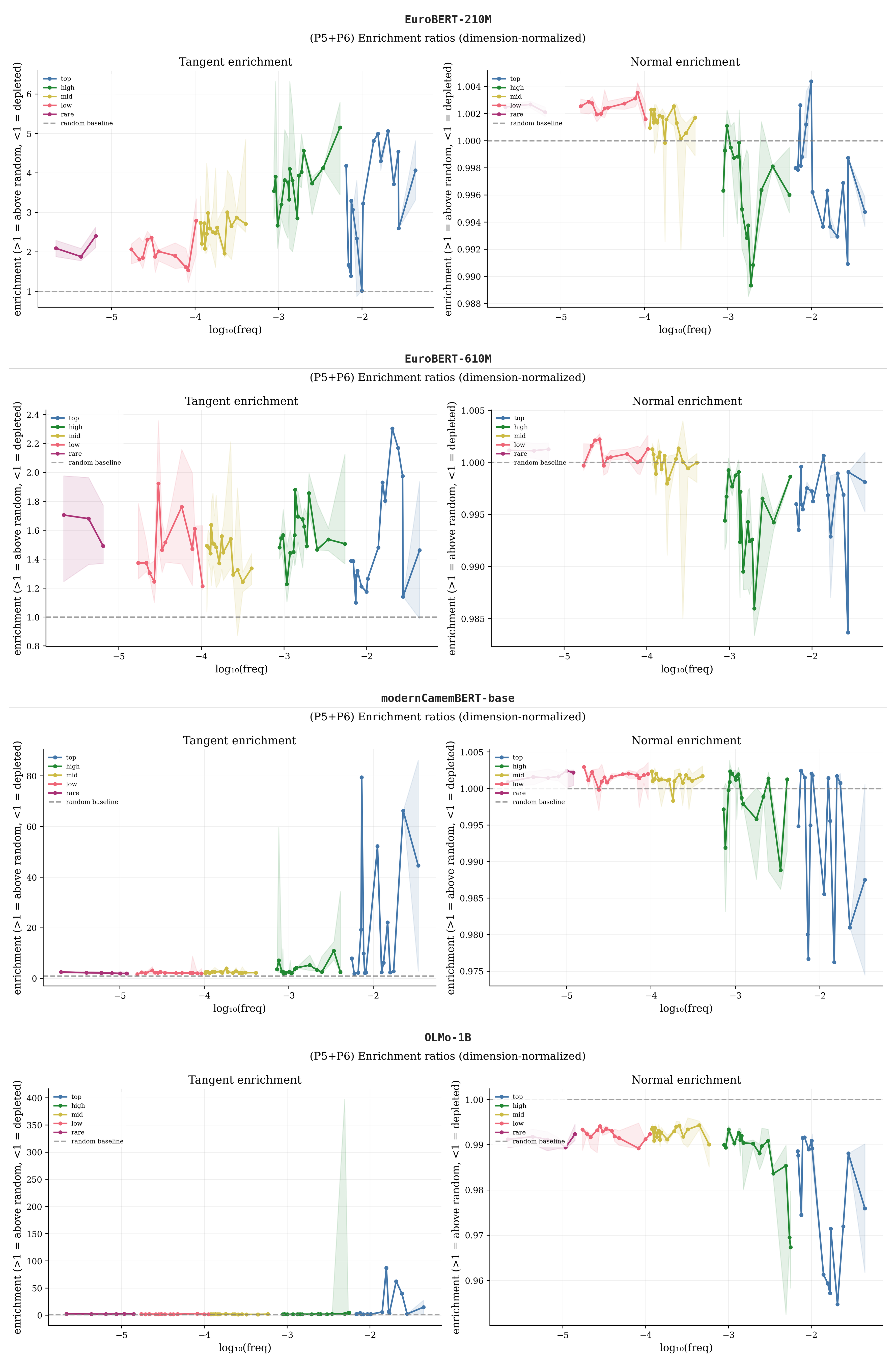}
  \caption[]{\textbf{Embedding-update tangent and normal enrichments across token frequency.} For each model, we compute tangent and normal enrichment ratios on the embedding update \(dW=W_{t+n}-W_t\), normalized so that the random baseline is \(1\). The left panel in each model reports tangent enrichment; the right panel reports normal enrichment. Curves are plotted against \(\log_{10}(\mathrm{freq})\) and colored by token rarity group, which makes the frequency dependence visible while preserving within-group variability. Frequent tokens tend to exhibit the strongest tangent enrichment, whereas normal enrichment remains near, or slightly below, the random baseline. The effect is visibly weaker for the EuroBERT models, consistent with the milder frequency-concentration pattern in Figure~\ref{fig:frequency_variance_dynamics}.}
\end{figure*}

\begin{table*}[ht]
\centering
\scriptsize
\setlength{\tabcolsep}{4pt}
\caption{\textbf{Token-level embedding-update enrichments.} For each model we report representative tracked tokens spanning the empirical frequency range, together with their frequencies and the corresponding tangent and normal enrichment ratios computed on the embedding update \(dW\). Each ratio is normalized by its random-subspace baseline, so that \(1\) corresponds to chance level. Token selection is guided by the early regime, and the same tokens are then reused in the stable columns. (*) SmolLM2-360m only has 16 available checkpoints starting in the late training phase.}
\begin{tabular}{|l|l|l|c|c|c|c|}
\hline
\textbf{Model} & \textbf{Token} & \textbf{Freq.} & \multicolumn{2}{c|}{\textbf{Early}} & \multicolumn{2}{c|}{\textbf{Stable}} \\
\cline{4-7}
 &  &  & \textbf{Tangent} & \textbf{Normal} & \textbf{Tangent} & \textbf{Normal} \\
\hline
\textbf{EuroBERT-210m} & \texttt{of} & 2.40e-02 & 33.27 & 0.99 & 3.72 & 1.00 \\
\cline{2-7}
 & \texttt{her} & 1.63e-03 & 7.40 & 1.00 & 3.53 & 1.00 \\
\cline{2-7}
 & \texttt{lines} & 1.16e-04 & 2.79 & 1.00 & 2.24 & 1.00 \\
\cline{2-7}
 & \texttt{jam} & 6.45e-06 & 3.26 & 1.00 & 2.76 & 1.00 \\
\cline{2-7}
 & \texttt{chat} & 2.15e-06 & 3.63 & 1.00 & 2.02 & 1.00 \\
\hline\hline
\textbf{EuroBERT-610m} & \texttt{the} & 4.58e-02 & 28.24 & 1.00 & 1.94 & 1.00 \\
\cline{2-7}
 & \texttt{after} & 1.11e-03 & 6.66 & 1.00 & 1.48 & 1.00 \\
\cline{2-7}
 & \texttt{Christmas} & 1.38e-04 & 4.02 & 1.00 & 1.36 & 1.00 \\
\cline{2-7}
 & \texttt{streaming} & 2.15e-06 & 6.04 & 1.00 & 1.57 & 1.00 \\
\cline{2-7}
 & \texttt{chat} & 2.15e-06 & 7.16 & 1.00 & 1.37 & 1.00 \\
\hline\hline
\textbf{ModernCamenBERT} & \texttt{\#\#s} & 1.48e-02 & 53.90 & 0.99 & 22.16 & 0.98 \\
\cline{2-7}
 & \texttt{his} & 2.46e-03 & 4.07 & 1.00 & 1.90 & 1.00 \\
\cline{2-7}
 & \texttt{corn} & 1.02e-04 & 3.10 & 1.00 & 2.05 & 1.00 \\
\cline{2-7}
 & \texttt{lib} & 2.41e-05 & 4.41 & 1.00 & 1.75 & 1.00 \\
\cline{2-7}
 & \texttt{\#\#acha} & 2.01e-06 & 3.98 & 1.00 & 2.18 & 1.00 \\
\hline\hline
\textbf{OLMo-1b} & \texttt{for} & 5.62e-03 & 16.23 & 1.00 & 5.45 & 0.96 \\
\cline{2-7}
 & \texttt{but} & 1.31e-03 & 4.88 & 1.00 & 1.94 & 0.99 \\
\cline{2-7}
 & \texttt{love} & 1.49e-04 & 7.17 & 1.00 & 1.90 & 0.99 \\
\cline{2-7}
 & \texttt{helping} & 1.95e-05 & 13.61 & 1.00 & 1.96 & 0.99 \\
\cline{2-7}
 & \texttt{robust} & 4.33e-06 & 14.93 & 1.00 & 2.64 & 0.99 \\
\hline\hline
\textbf{pythia-1b} & \texttt{to} & 1.58e-02 & 128.97 & 0.99 & 15.80 & 0.99 \\
\cline{2-7}
 & \texttt{into} & 9.57e-04 & 3.66 & 1.00 & 6.88 & 1.00 \\
\cline{2-7}
 & \texttt{Canadian} & 1.78e-04 & 3.01 & 1.00 & 5.29 & 1.00 \\
\cline{2-7}
 & \texttt{gains} & 1.08e-05 & 2.58 & 1.00 & 6.51 & 1.00 \\
\cline{2-7}
 & \texttt{generator} & 2.16e-06 & 3.27 & 1.00 & 5.45 & 1.00 \\
\hline\hline
\textbf{pythia-160m} & \texttt{the} & 4.65e-02 & 58.52 & 0.98 & 1.99 & 0.98 \\
\cline{2-7}
 & \texttt{also} & 1.46e-03 & 4.08 & 1.00 & 2.04 & 0.99 \\
\cline{2-7}
 & \texttt{poem} & 1.21e-04 & 2.05 & 1.00 & 4.47 & 0.98 \\
\cline{2-7}
 & \texttt{gains} & 1.08e-05 & 1.77 & 1.00 & 3.26 & 0.98 \\
\cline{2-7}
 & \texttt{Dual} & 2.16e-06 & 1.89 & 1.00 & 3.66 & 0.99 \\
\hline\hline
\textbf{pythia-410m} & \texttt{to} & 1.58e-02 & 27.85 & 1.00 & 14.59 & 0.99 \\
\cline{2-7}
 & \texttt{It} & 1.05e-03 & 2.92 & 1.00 & 4.89 & 1.00 \\
\cline{2-7}
 & \texttt{signed} & 1.43e-04 & 2.32 & 1.00 & 4.75 & 1.00 \\
\cline{2-7}
 & \texttt{instinct} & 8.66e-06 & 2.06 & 1.00 & 5.54 & 1.00 \\
\cline{2-7}
 & \texttt{doctrine} & 4.33e-06 & 2.55 & 1.00 & 4.44 & 1.00 \\
\hline\hline
\textbf{SmolLM2-1.7b} & \texttt{c} & 2.22e-02 & 112.84 & 0.97 & 23.42 & 0.96 \\
\cline{2-7}
 & \texttt{M} & 2.49e-03 & 41.13 & 0.99 & 10.65 & 0.98 \\
\cline{2-7}
 & \texttt{L} & 1.64e-03 & 37.79 & 0.99 & 10.20 & 0.97 \\
\cline{2-7}
 & \texttt{x} & 1.33e-03 & 51.25 & 0.98 & 7.79 & 0.96 \\
\cline{2-7}
 & \texttt{q} & 5.32e-04 & 82.81 & 0.98 & 9.82 & 0.95 \\
\hline\hline
\textbf{SmolLM2-360m} (*) & \texttt{i} & 5.29e-02 & - & - & 7.29 & 1.00 \\
\cline{2-7}
 & \texttt{y} & 1.10e-02 & - & - & 4.12 & 0.99 \\
\cline{2-7}
 & \texttt{x} & 1.33e-03 & - & - & 4.13 & 0.99 \\
\cline{2-7}
 & \texttt{q} & 5.32e-04 & - & - & 5.28 & 0.99 \\
\cline{2-7}
 & \texttt{X} & 6.73e-05 & - & - & 6.26 & 0.99 \\
\hline
\end{tabular}%
\label{tab:pc_enrichment_summary}
\end{table*}

\paragraph{Interpretation.}
The update-level picture is highly consistent with the main-paper gradient analysis. Across models, tangent enrichment is systematically larger than normal enrichment, with the largest contrasts typically observed for the most frequent tokens. For several decoder-style models, early tangent enrichment reaches one or even two orders of magnitude above the random baseline, whereas normal enrichment stays close to \(1\) and is often slightly depleted. This is the same qualitative signature as in the true-gradient study: updates are organized preferentially along tangent directions rather than equally sized normal directions.

The temporal pattern is also informative. The contrast is generally stronger in the early regime and weaker in the stable one, which agrees with the theoretical picture that much of the representational organization is established relatively early and that later training proceeds with a smaller residual signal. Importantly, the ``early'' segment available here already lies well within the training trajectory rather than near initialization, so the observed tangent bias does not rely on a transient start-of-training effect. The weaker curves obtained for EuroBERT are also coherent with the earlier frequency-concentration analysis: these models show a milder contraction of frequent-token trajectories and, correspondingly, a smaller tangent-bias scale. This does not weaken the theoretical picture. Rather, it supports it: when the frequency-driven concentration effect is attenuated, the tangent-preference signal is attenuated as well. Even in this weaker regime, EuroBERT still shows tangent enrichment clearly above the matched normal baseline for frequent tokens, so it is best interpreted as a weaker instance of the same mechanism rather than a counterexample. Overall, the effective embedding updates provide a complementary validation that the parameter motion itself remains more aligned with tangent than with normal directions throughout the available training trajectory.

\section{Why IsoScore* Alone Does Not Identify the Carrier Subspace}
\label{app:isoscore-carrier}

IsoScore* is a useful summary statistic for anisotropy. It inherits from IsoScore the same normalize-distance-rescale construction, but applies it to the eigenspectrum of the shrunk covariance matrix rather than to the diagonal of the PCA-reoriented covariance \citep{rudman-etal-2022-isoscore,rudman2024stableanisotropicregularization}. This makes it compact, scale-free, and almost everywhere differentiable, which is precisely why it is attractive in optimization-oriented settings. Its limitation is different: it quantifies \emph{how much} spectral anisotropy is present, but not \emph{where} that anisotropy lives geometrically.

\paragraph{Closed form.}
Let \(\lambda=(\lambda_1,\ldots,\lambda_d)\) be the nonnegative eigenvalues of the shrunk covariance matrix \(\Sigma_\zeta\). Following \citet{rudman2024stableanisotropicregularization}, IsoScore* normalizes the spectrum as \(\hat{\lambda}=\sqrt{d}\,\lambda/\|\lambda\|_2\), defines the isotropy defect by the Euclidean distance from \(\mathbf{1}\), and then applies the same occupancy and affine rescaling used in IsoScore \citep{rudman-etal-2022-isoscore}. A short algebraic simplification gives a closed form depending only on the \(\ell_1\) and \(\ell_2\) norms of the spectrum. Indeed, because \(\|\hat{\lambda}\|_2^2=d\),
\[
\|\hat{\lambda}-\mathbf{1}\|_2^2
=
\|\hat{\lambda}\|_2^2+\|\mathbf{1}\|_2^2-2\langle \hat{\lambda},\mathbf{1}\rangle
=
2d-2\sum_{i=1}^d \hat{\lambda}_i,
\]
so the defect term depends only on \(\sum_i \hat{\lambda}_i\). Using \(\sum_i \hat{\lambda}_i=\sqrt{d}\,\|\lambda\|_1/\|\lambda\|_2\), the occupancy step collapses, after substitution, to
\[
\mathrm{IsoScore}^\star(\lambda)
=
\frac{\|\lambda\|_1^2/\|\lambda\|_2^2 - 1}{d-1}.
\]
This immediately recovers the intended boundary cases: if all \(d\) eigenvalues are equal, then \(\|\lambda\|_1^2/\|\lambda\|_2^2=d\) and \(\mathrm{IsoScore}^\star=1\); if exactly one eigenvalue is nonzero, then \(\|\lambda\|_1^2/\|\lambda\|_2^2=1\) and \(\mathrm{IsoScore}^\star=0\).

The same closed form also makes the limitation transparent. IsoScore* depends only on the spectrum through the ratio \(\|\lambda\|_1^2/\|\lambda\|_2^2\); it is therefore blind to the eigenspaces carrying that spectrum. Two covariance matrices with the same eigenvalues but different carrier subspaces have exactly the same IsoScore*. For subspace attribution, this means that IsoScore* alone cannot tell whether the dominant anisotropic mass is tangent-aligned, normal-aligned, or distributed across a mixture of both.

A concrete example makes this distinction explicit. Consider an ambient dimension \(d=4\) and a covariance spectrum
\[
\lambda=(10^2,10,1,10^{-3}),
\]
for which \(\mathrm{IsoScore}^\star(\lambda)\approx 0.073\). Yet the spectrum is already highly concentrated: the top eigenvalue alone carries about \(90.1\%\) of the variance, and the top two together about \(99.1\%\). Suppose now that the tangent proxy coincides with those first two dominant directions. After removing that subspace, the residual positive spectrum becomes
\[
\lambda_{\mathrm{res}}=(1,10^{-3}),
\]
which, under the same ambient normalization \(d=4\), yields \(\mathrm{IsoScore}^\star(\lambda_{\mathrm{res}})\approx 0.002\). This is smaller, but not a contradiction: the residual is more anisotropic because one remaining direction still dominates the other and by three orders of magnitude now. Thus, even when the removed tangent subspace clearly carries almost all of the dominant spectral mass, the residual IsoScore* need not by itself identify that carrier subspace.

This is why we do not interpret IsoScore* removals in isolation. In the main paper, IsoScore* is paired with two additional comparisons that are subspace-sensitive: the gradient-energy concentration test, which asks whether \(Q_T\) captures unusually large energy relative to matched normal alternatives, and the direct \(T>N\) sign test, which asks whether tangent removal improves isotropy more consistently than matched-normal removal across anchors. These comparisons do not replace IsoScore*; they provide the geometric attribution that a global scalar anisotropy statistic cannot provide on its own. In that sense, IsoScore* is best viewed here as a strong compact descriptor of spectral anisotropy, but not as a standalone identifier of the subspace that carries it.

\section{Toy Experiment: Gradient-Concept Alignment}
\label{app:gradient-alignment}

To complement the cross-model true-gradient study in Section~\ref{sec:true-grad-alignment}, we conduct a controlled toy experiment that directly tests the tangent gradient dominance hypothesis (Proposition~\ref{prop:tangent-dominance}) in a simplified setting.

\paragraph{Motivation: simulating language structure.}
The Forest Cover Type dataset provides a tractable proxy for language modeling. Like language data, it features: (i)~discrete, categorical inputs (44 binary one-hot features encoding soil type and wilderness area, analogous to token identities); (ii)~continuous covariates (10 terrain features, analogous to contextual variation); and (iii)~data imbalance, mimicking a stronger biased law than the Zipfian frequency distribution of natural language (See Figure \ref{fig:toy_anisotropy}). This structure allows us to study anisotropy emergence without the computational cost of full language model training.

\begin{figure}[t]
  \centering
  \includegraphics[width=0.85\linewidth]{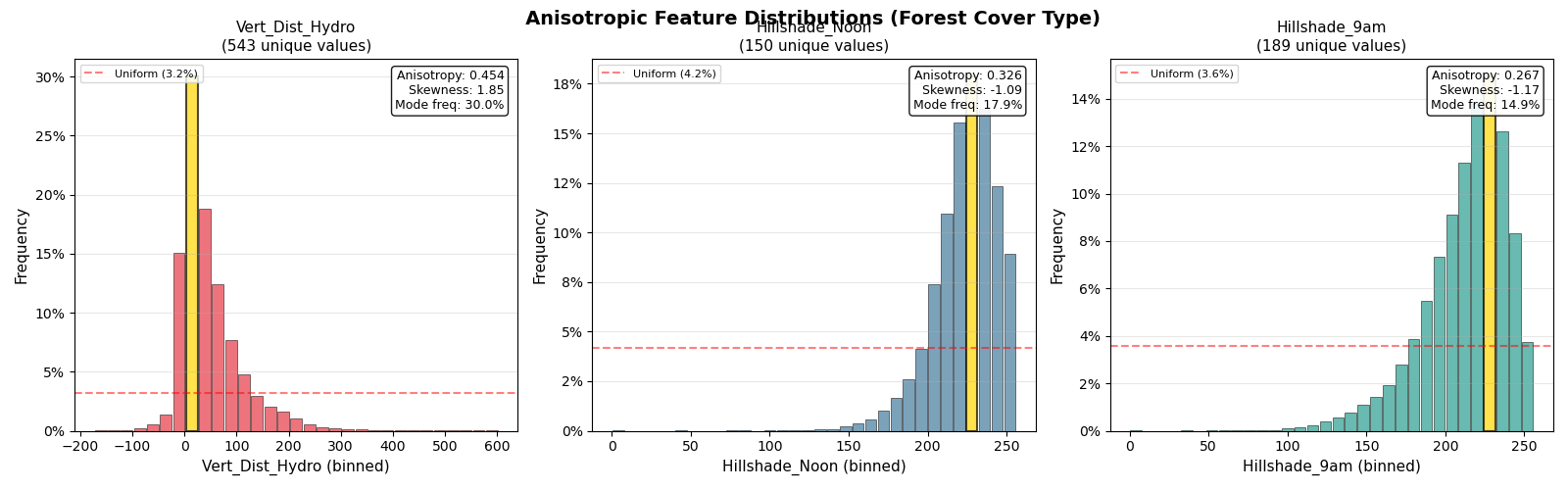}
  \caption{Forest Cover Type Anisotropic Components.}
  \label{fig:toy_anisotropy}
\end{figure}

\paragraph{Model and training.}
We train a small 2-layer Transformer ($d_{\mathrm{model}}=56$, $d_{\mathrm{ff}}=256$, 4~attention heads, $\approx$100k parameters) on Forest Cover Type classification using AdamW with learning rate $10^{-3}$ and batch size~256 for 50~epochs. Throughout training, we extract checkpoints at 1\% intervals to track representation dynamics.

\paragraph{Concept extraction.}
At each checkpoint, we apply Semi-NMF, PCA, and ICA to extract $k=50$ concept directions from: (i)~training data $\mathbf{X}$; (ii)~intermediate activations $\mathbf{A}^{(\ell)}$; (iii)~weight matrices $\mathbf{W}^{(\ell)}$; and (iv)~gradient matrices $\nabla_{\mathbf{W}^{(\ell)}}\mathcal{L}$. Concepts are normalized and compared via absolute cosine similarity.

\paragraph{Alignment metrics.}
For concept sets $\mathbf{C}_A$ and $\mathbf{C}_B$, we compute the correlation matrix $R_{ij} = |\hat{\mathbf{c}}_i^A \cdot \hat{\mathbf{c}}_j^B|$ and report: (i)~\emph{best-match alignment}: $\bar{r}_{\mathrm{best}} = \frac{1}{k}\sum_j \max_i R_{ij}$; (ii)~\emph{top-10 alignment}: mean of the 10~highest best-match correlations; (iii)~\emph{weighted alignment}: correlations weighted by concept importance (explained variance for PCA, coefficient norms for Semi-NMF).

\paragraph{Gradient-data alignment.}
The key test of Proposition~\ref{prop:tangent-dominance} is whether gradient concepts align with data concepts. We track $\bar{r}_{\mathrm{best}}(\mathbf{C}_{\mathrm{grad}}^{(\ell)}, \mathbf{C}_{\mathrm{data}})$ throughout training. The hypothesis predicts high alignment as gradients concentrate along data-derived tangent directions. It is important to note here that we do not compare direct data alignment, but rather data concepts already extracted with concepts present in the gradient updates (seen as a data matrix).

\paragraph{Key findings.}
The toy experiment confirms several predictions: (i)~gradient-data concept alignment appears relatively high during training even if it fluctuates. Furthermore, for PCA and semiNMF, it is always highly correlated (See Figure \ref{fig:gradient_alignment_evolution}); (ii)~weight concepts align quickly with activation concepts, indicating that learned parameters do preferably organize around the dominant concepts directions (See Figure \ref{fig:correlation_density_rgb_all_layers}).

\begin{figure}[t]
    \centering
    \includegraphics[width=\linewidth]{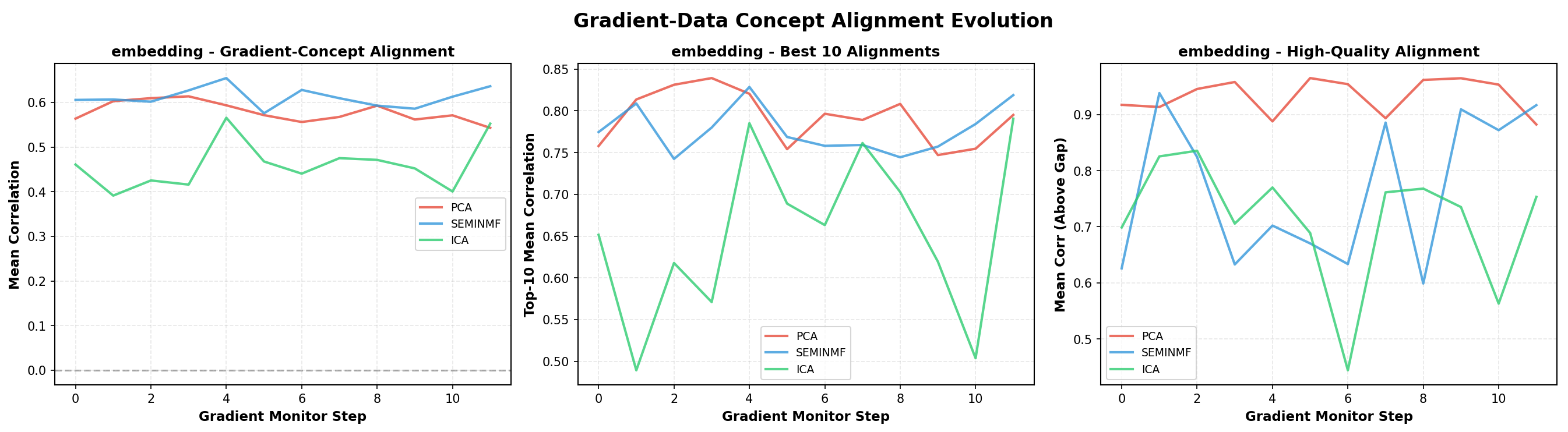}
    \caption{Evolution of gradient-data concept alignment for the embedding layer during training. We report three complementary alignment metrics as a function of the gradient monitoring step: (left) mean correlation over all extracted concepts, (middle) mean correlation restricted to the top-10 highest-aligned concepts, and (right) mean correlation for high-quality alignments (above the spectral gap threshold). Data concepts are compared against gradient-derived concepts extracted from the embedding layer using three decomposition methods: PCA (red), Semi-NMF (blue), and ICA (green). Across all metrics, PCA and Semi-NMF exhibit consistently stronger and more stable alignment than ICA, with Semi-NMF showing particularly strong alignments.}
    \label{fig:gradient_alignment_evolution}
\end{figure}

\begin{figure}[t]
    \centering
    \includegraphics[width=\linewidth]{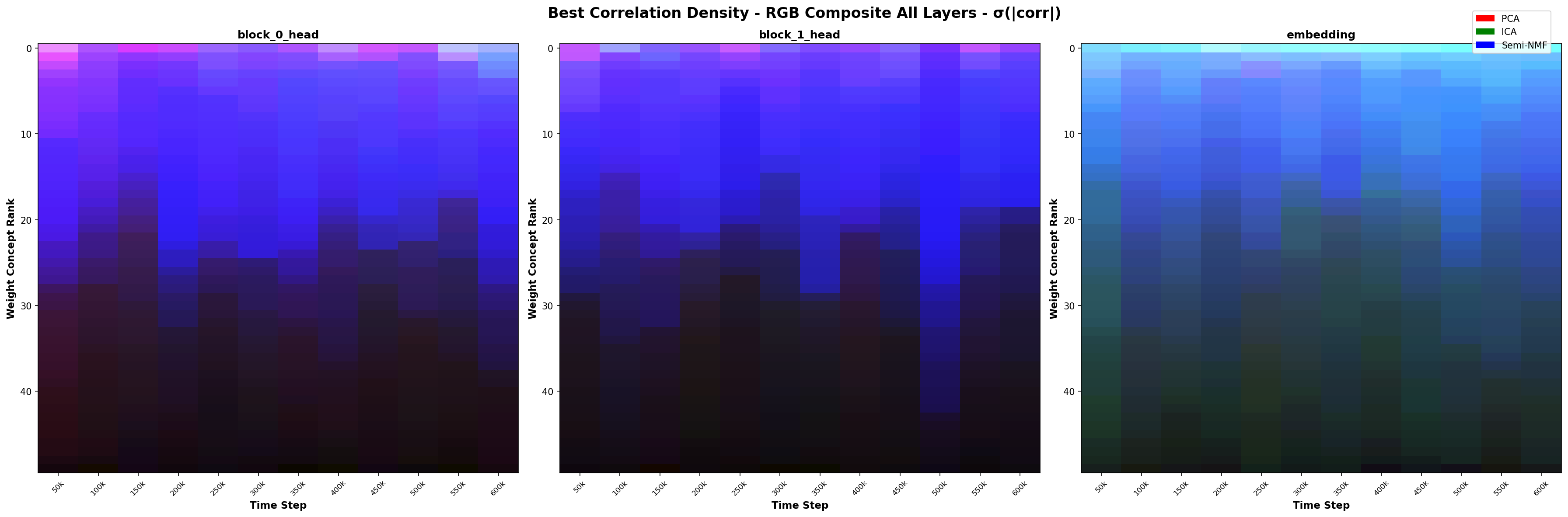}
    \caption{Correlation density maps for the three main components of the model: the first transformer linear head, the second transformer linear head, and the embedding matrix (first layer), shown as a function of training time (x-axis). To construct these maps, we first pass data-derived concepts through the model to obtain an estimate of the concepts learned by the network. Independently, we treat the weight matrices of the corresponding linear layers as data matrices and extract latent concepts using PCA, ICA, and Semi-NMF. The resulting concept weights from each decomposition are used to rank the extracted components (rank 0 corresponds to the highest-weight concept, rank 50 to the lowest). For each rank, we compute the absolute correlation between learned concepts and data concepts and visualize these values as an RGB image, with PCA encoded in red, ICA in green, and Semi-NMF in blue. Brighter colors indicate stronger correlations. Across all layers, higher-ranked concepts consistently exhibit stronger correlations throughout training.}
    \label{fig:correlation_density_rgb_all_layers}
\end{figure}

\section{Experimental Setup Details}
\label{app:experimental_setup}
\subsection{Main Experiments}
The primary experiments of Section~\ref{sec:true-grad-alignment} are run on ten language models spanning encoder-style and decoder-style architectures: EuroBERT-210m, EuroBERT-610m, moderncamembert-base, OLMo-1B, pythia-160m, pythia-410m, pythia-1b, Gaperon-1.5B, SmolLM2-360M, and SmolLM2-1.7B. For every model we use the same merged dataset \texttt{revisited\_mix}, obtained by round-robin sampling from \texttt{allenai/c4} (English), \texttt{Salesforce/wikitext} with \texttt{wikitext-103-raw-v1}, \texttt{HuggingFaceFW/fineweb-edu} with \texttt{sample-10BT}, \texttt{ccdv/arxiv-summarization} using the article field, and \texttt{wikimedia/wikipedia} with \texttt{20231101.fr}. The resulting corpus contains \(6000\) sequences of length \(128\), with \(1200\) sequences from each source.

Anchor tokens are selected after special-token filtering and a minimum occurrence threshold, then stratified into \(4\) frequency bins with \(6\) anchors per bin, for a total of \(24\) anchors per model. Each anchor receives \(24\) fit contexts and \(8\) evaluation contexts sampled without fit/eval overlap. Early and late phases correspond respectively to the first \(30\%\) and last \(30\%\) of the available training trajectory, subsampled uniformly to at most \(6\) and \(4\) checkpoints. Tangent fits use PCA with \(90\%\) explained-variance target, minimum rank \(4\), maximum rank \(12\), and the main paper reports the early concatenation context regime in which the early activation-derived basis is fit once and reused throughout evaluation.

True gradients are computed by ordinary backpropagation of the model loss on the evaluation contexts (decoder and encoder adapted). The reported energy null uses \(20\) matched-rank random normal samples, and the anisotropy statistics use a shrinkage-regularized IsoScore* computation with \(\alpha=0.05\). The paper-facing summaries first average checkpoint-level quantities within each anchor and then aggregate across anchors, which is why the main table emphasizes effect sizes together with explicit Monte Carlo and sign-test p-values rather than standard deviations.

\subsection{Complementary Experiments}

The supplementary mechanistic-interpretability analysis in Appendix~\ref{app:mi-training-dynamics} uses EuroBERT-210M, Pythia-410M, and SmolLM2-360M checkpoints, a fixed \texttt{Salesforce/wikitext} subset, and Semi-NMF concept extraction with \(600\) components. It reports intrinsic-dimension, entropy, spectral, and weight-alignment statistics as complementary descriptive evidence about how activation geometry evolves during training. Appendix~\ref{app:embedding-update-enrichment} adds a second supplementary view based on realized embedding updates \(dW\), reporting tangent and normal enrichments relative to random across token-frequency groups and representative tracked tokens.

Our experiments were conducted on three transformer language models of varying sizes: EuroBERT-210m (48 intermediate checkpoints), Pythia-410m (32 intermediate checkpoints), and SmolLM2-360M (16 intermediate checkpoints). All models were analyzed using the Salesforce/wikitext dataset (wikitext-103-raw-v1 split), with 100 samples per checkpoint, each truncated to approximately 100 tokens, with fixed seed.

For concept extraction, we employed Semi-Nonnegative Matrix Factorization (SemiNMF) with 600 concepts components (a value under hidden dimension is mandatory) using an interpreto\footnote{https://github.com/FOR-sight-ai/interpreto.git}/overcomplete\footnote{https://github.com/KempnerInstitute/overcomplete.git} backend. Activations were pooled using mean pooling across the sequence dimension. Our analysis pipeline computed multiple metrics including: (i) intrinsic dimension estimates via two-Nearest Neighbors (twoNN) and graph-based methods, (ii) entropy measures from graph analysis, (iii) anisotropy metrics including spectral gap, $PCA_{70\%}$, and eigenvector stability, (iv) effective rank of concept representations, and (v) weight correlation analysis between consecutive layers. All intermediate checkpoints were processed to capture the full training dynamics across model development.

\appendix


\end{document}